\documentclass{article}

\usepackage{microtype}
\usepackage{graphicx}
\usepackage{subcaption}
\usepackage{booktabs} 

\usepackage{multirow}
\usepackage{makecell}
\usepackage{tabularx}
\usepackage{color, colortbl}
\usepackage[dvipsnames]{xcolor}
\usepackage{enumitem} 
\usepackage{subcaption}
\usepackage{graphicx}
\usepackage{tabularx}
\usepackage{array}

\newcolumntype{Y}{>{\centering\arraybackslash}X}

\usepackage{hyperref}
\usepackage{float}  

\usepackage[preprint]{icml2026}

\usepackage{amsmath}
\usepackage{amssymb}
\usepackage{mathtools}
\usepackage{amsthm}
\usepackage{amsfonts} 

\usepackage[capitalize,noabbrev]{cleveref}

\usepackage{algorithm}
\usepackage{algorithmic}
\usepackage{newfloat}
\usepackage{listings}

\theoremstyle{plain}
\newtheorem{theorem}{Theorem}[section]
\newtheorem{proposition}[theorem]{Proposition}
\newtheorem{lemma}[theorem]{Lemma}

\theoremstyle{definition}

\theoremstyle{remark}
\newtheorem{remark}[theorem]{Remark}

\usepackage[textsize=tiny]{todonotes}

\definecolor{Gray}{gray}{0.93}

\newcommand{\Q}{Q_\theta}
\newcommand{\Qa}[1]{\nabla_a \Q(#1)}
\newcommand{\Haa}[1]{\nabla^2_{aa} \Q(#1)}
\newcommand{\Hsa}[1]{\nabla^2_{sa} \Q(#1)}
\newcommand{\astar}{a^*}

\newcommand{\pave}{PAVE}

\icmltitlerunning{Stabilizing the Q-Gradient Field for Policy Smoothness in Actor--Critic Methods}

\begin{document}
\twocolumn[
  \icmltitle{Stabilizing the Q-Gradient Field for Policy Smoothness in Actor--Critic Methods}
  \icmlsetsymbol{equal}{*}
  \icmlsetsymbol{corrauth}{$\dagger$}
  \begin{icmlauthorlist}
    \icmlauthor{Jeong Woon Lee}{equal,khu}
    \icmlauthor{Kyoleen Kwak}{equal,khu}
    \icmlauthor{Daeho Kim}{equal,khu}
    \icmlauthor{Hyoseok Hwang}{corrauth,khu}
  \end{icmlauthorlist}
  \icmlaffiliation{khu}{College of Software, Kyung Hee University}
  \icmlcorrespondingauthor{Hyoseok Hwang}{hyoseok@khu.ac.kr}
  \icmlkeywords{Machine Learning, ICML}
  \vskip 0.3in
]
\printAffiliationsAndNotice{\icmlEqualContribution} 

\begin{abstract}
Policies learned via continuous actor-critic methods often exhibit erratic, high-frequency oscillations, making them unsuitable for physical deployment. 
Current approaches attempt to enforce smoothness by directly regularizing the policy's output. 
We argue that this approach treats the symptom rather than the cause. 
In this work, we theoretically establish that policy non-smoothness is fundamentally governed by the differential geometry of the critic. 
By applying implicit differentiation to the actor-critic objective, we prove that the sensitivity of the optimal policy is bounded by the ratio of the Q-function's mixed-partial derivative (noise sensitivity) to its action-space curvature (signal distinctness). 
To empirically validate this theoretical insight, we introduce  \emph{\textbf{P}olicy-\textbf{A}ware \textbf{V}alue-field \textbf{E}qualization} (PAVE), a critic-centric regularization framework that treats the critic as a scalar field and \emph{stabilizes} its induced action-gradient field. 
\pave~rectifies the learning signal by minimizing the Q-gradient volatility while preserving local curvature. 
Experimental results demonstrate that PAVE achieves smoothness comparable to policy-side smoothness regularization methods,
while maintaining competitive task performance, without modifying the actor.
\end{abstract}

\section{Introduction}
\label{sec:introduction}

Deep reinforcement learning (RL) has become a standard approach for continuous control, with strong results across locomotion and manipulation benchmarks~\cite{hwangbo2019learning, levine2016end} using actor--critic methods such as Soft Actor-Critic (SAC)~\cite{haarnoja2018soft} and Twin Delayed Deep Deterministic Policy Gradient (TD3)~\cite{fujimoto2018addressing}. 
However, a persistent gap between simulation success and real deployment remains action instability: trained policies often exhibit high-frequency oscillations and ``jerky'' control signals~\cite{chen2021inertia, mysore2021regularizing}. 
In physical systems, such oscillations translate into higher energy consumption, accelerated actuator wear, vibration-induced fatigue, and potential safety issues;~\cite{inman_vibration, DELAPRESILLA2023108805} they also degrade human-perceived quality even when task return is high~\cite{chen2021inertia}.

To mitigate this, the dominant paradigm has been to explicitly constrain the policy network. 
This body of work includes adding regularization terms to the actor's loss function to enforce spatial or temporal consistency~\cite{mysore2021regularizing,kobayashi2022l2c2,lee2024gradient}, and more recently, aligning actions with trajectory-based predictions to capture system dynamics~\cite{kwak2026enhancingcontrolpolicysmoothness}. 
Alternatively, architectural approaches modify the policy network structure to enforce Lipschitz continuity~\cite{takase2022stability,song2023lipsnet}.
Theoretically, these methods share a common goal of reducing the policy network's Lipschitz constant~\cite{song2023lipsnet}.

While these approaches mitigate action oscillations, we contend that they address the \textit{symptom} rather than the \textit{cause}. 
Even under strong regularization, if the underlying Q-function remains noisy, the resulting policy gradients continue to drive the actor in erratic directions. 
This introduces a conflict during optimization: the actor faces competing objectives—minimizing the smoothing penalty versus ascending a volatile learning signal. Consequently, the policy is compelled to diverge from the return-maximizing manifold to satisfy smoothness constraints, compromising task performance.

We propose a fundamental shift in perspective: from forcing the actor to be smooth, to inducing smoothness via a stable learning signal.
We argue that the root cause of policy non-smoothness lies deeper, within the geometric structure of the critic's value function that guides the actor. 
This dynamic is central to continuous actor-critic frameworks, which have become the standard for continuous control~\cite{lillicrap2015continuous, fujimoto2018addressing, haarnoja2018soft}. 
In this paradigm, the policy is updated by ascending the gradient of the Q-function with respect to the action, $\nabla_a Q(s, a)$. 
This Q-gradient field serves as the learning signal, defining the direction of steepest ascent for policy improvement. 
If this field is geometrically unstable—where the gradient direction flips drastically with infinitesimal state perturbations—the actor is fundamentally misled. 
Since the policy is optimized to follow these gradients, diverging optimization directions at neighboring states compel the actor to produce disparate actions, directly manifesting as high-frequency oscillations.

To formalize this idea, we analyze the optimal policy induced by the critic via implicit differentiation. 
This theoretical examination reveals that the sensitivity of the optimal action is strictly governed by the interplay of two geometric properties: (i) the mixed state--action Hessian $\nabla^2_{sa}Q$ and (ii) the conditioning of the action Hessian $\nabla^2_{aa}Q$. 
The mixed Hessian $\nabla^2_{sa}Q$ quantifies the volatility of the learning signal, representing how rapidly the direction of steepest ascent rotates with small state variations. 
Meanwhile, the action Hessian $\nabla^2_{aa}Q$ dictates the curvature of the peak itself; a flat landscape creates an ill-conditioned maximization problem where the optimal action becomes hypersensitive to negligible shifts.

This observation suggests a different strategy: rather than constraining the actor to compensate for an irregular learning signal, we propose to directly regularize the geometry of the Q-function itself. 
Intuitively, if the value field is smooth and well-conditioned over the state--action manifold, the induced action-gradient field becomes stable by construction. 
Crucially, this approach differs from naively smoothing the value function, which might obscure necessary control information. 
By minimizing noise sensitivity ($\nabla^2_{sa}Q$) while explicitly preserving signal curvature ($\nabla^2_{aa}Q$), we ensure that smooth policies emerge naturally without sacrificing the distinctness of the optimal action.

 To empirically validate this theoretical insight, we introduce  \emph{\textbf{P}olicy-\textbf{A}ware \textbf{V}alue-field \textbf{E}qualization} (PAVE), a critic-centric regularization framework that treats the critic as a scalar field over the state--action manifold and \emph{stabilizes} its induced action-gradient field. PAVE combines (i) \emph{Mixed-Partial Regularization} to bound state-induced changes of the action-gradient (ii) \emph{Vector Field Consistency} based on Fisher divergence to align Q-gradient fields across temporal transitions, and (iii) \emph{Curvature Preservation} to maintain well-defined local maxima.
Our experimental results show that, by modifying the critic's geometric structure alone, PAVE achieves action smoothness comparable to policy-side regularization methods, without any modification to the actor.
Our contributions are as follows:
\begin{itemize}
    \item We identify the critic's geometry as the primary source of action instability. By applying implicit differentiation, we rigorously prove that policy sensitivity is fundamentally governed by the mixed-partial derivatives and action curvature of the Q-function.
    \item We propose \pave, a critic-centric regularization framework designed to validate our theory by stabilizing the Q-gradient field rather than constraining the actor.
    \item We provide empirical evidence that \textit{paving} the value landscape yields enhanced control smoothness while maintaining competitive task scores, thereby validating our hypothesis that critic geometry is the fundamental driver of action stability.
\end{itemize}

\section{Related Work}
\label{sec:related}

\subsection{Policy-side smoothness regularization}
A prevalent strategy for mitigating action oscillations augments the actor's objective with regularization terms that limit the policy network's Lipschitz constant.
A seminal work in this domain, Conditioning for Action Policy Smoothness (CAPS)~\cite{mysore2021regularizing}, penalizes differences between temporally consecutive actions and spatially adjacent states sampled from a fixed Gaussian neighborhood.
To improve adaptability, Local Lipschitz Continuous Constraint (L2C2)~\cite{kobayashi2022l2c2} dynamically adjusts the spatial neighborhood radius based on the state.
Gradient-based CAPS (Grad-CAPS)~\cite{lee2024gradient} regularizes action gradients to suppress high-frequency zigzagging.
Most recently, ASAP~\cite{kwak2026enhancingcontrolpolicysmoothness} refines spatial regularization by utilizing transition-induced distributions and aligning actions with predictions from preceding states, supplemented by second-order temporal penalties.
Complementary to loss-based penalties, architectural methods modify the neural network structure to intrinsically satisfy Lipschitz continuity. Spectral Normalization~\cite{gogianu2021spectral, bjorck2021towards} has been adapted to RL to stabilize training by bounding the policy's Lipschitz constant.
More specialized architectures, LipsNet~\cite{song2023lipsnet} and its extension LipsNet++~\cite{song2025lipsnet}, constrain the policy's Jacobian via Multi-dimensional Gradient Normalization (MGN) and Fourier-based filtering, respectively.
Action chunking~\cite{zhao2023learning} achieves temporal consistency by predicting action sequences at the output level; in contrast, PAVE operates at the critic level, stabilizing the learning signal that drives actor updates.

\subsection{Critic geometry and gradient regularization}
Stabilizing learning signals via gradient/Jacobian regularization has been explored broadly~\cite{sokolic2017robust,roth2017stabilizing}. In RL, these methods are typically applied to improve robustness or training stability, but they do not explicitly connect critic differential geometry to policy smoothness~\cite{gogianu2021spectral}. Our work focuses on the specific geometric terms that govern the state-sensitivity of the critic-induced greedy policy (mixed partials and action curvature), and proposes targeted regularization that minimizes volatility while explicitly preserving useful local discrimination. In our approach, this can be interpreted as stabilizing and equalizing the induced \emph{Q-gradient field} that the actor follows during learning.

\subsection{Score matching and Fisher divergence}
Score matching aligns gradients of log-densities without partition functions~\cite{hyvarinen2005estimation}. Modern score-based generative modeling learns smooth score fields with related objectives~\cite{song2020score}. We adapt this perspective to RL by treating $\nabla_a Q$ as a vector field whose temporal consistency can be regularized, yielding more stable actor update directions along trajectories.

\section{Preliminary}
\label{sec:preliminaries}

We formulate the continuous control problem as a Markov Decision Process (MDP) defined by the tuple $(\mathcal{S}, \mathcal{A}, P, r, \gamma)$, where $\mathcal{S}$ and $\mathcal{A}$ denote the continuous state and action spaces, $P: \mathcal{S} \times \mathcal{A} \to \mathcal{P}(\mathcal{S})$ is the transition dynamics, $r: \mathcal{S} \times \mathcal{A} \to \mathbb{R}$ is the reward function, and $\gamma \in [0, 1)$ is the discount factor. The objective is to learn a policy $\pi: \mathcal{S} \to \mathcal{P}(\mathcal{A})$ that maximizes the expected return $\mathbb{E}[\sum_{t=0}^\infty \gamma^t r(s_t, a_t)]$.

Modern actor--critic methods learn a critic $Q_\theta(s, a)$ to approximate the action-value function and update a parameterized policy $\pi_\phi$ by ascending the Q-function's gradient. For deterministic actors, such as TD3~\cite{fujimoto2018addressing}, the update follows the Deterministic Policy Gradient (DPG)~\cite{silver2014deterministic}:
\begin{equation}
    \nabla_\phi J(\phi) = \mathbb{E}_{s \sim \mathcal{D}} \left[ \nabla_a Q_\theta(s, a)|_{a=\pi_\phi(s)} \nabla_\phi \pi_\phi(s) \right],
    \label{eq:dpg}
\end{equation}
where $\mathcal{D}$ represents the replay buffer. Similarly, for stochastic actors exemplified by SAC~\cite{haarnoja2018soft}, although the objective includes an entropy term, the policy update is fundamentally driven by the same gradient flow $\nabla_a Q_\theta(s, a)$ via the reparameterization trick.

In both paradigms, the actor attempts to approximate the implicit greedy policy $a^*(s) = \arg\max_a Q_\theta(s, a)$. Consequently, the geometric properties of the critic directly dictate the stability of the actor. We treat the gradient $\nabla_a Q_\theta(s, \cdot)$ as a vector field over the state--action manifold; stabilizing the geometry of Q-gradient field is the core focus of \pave.

\section{Theoretical Analysis: The Geometry of Policy Sensitivity}
\label{sec:geometry}

To remediate the fundamental origin of action instability in actor-critic methods, it is imperative to analyze how the differential geometry of the value function dictates the behavior of the induced policy. In this section, we rigorously derive the sensitivity of the greedy policy $\astar(s) = \arg\max_{a} \Q(s,a)$ with respect to state perturbations. By integrating the Implicit Function Theorem (IFT) into our analysis, we explicitly demonstrate that the smoothness of the policy is governed by specific Hessian terms of the critic.

\subsection{Implicit Policy Definition and Sensitivity Derivation}
In the actor-critic paradigm, the actor $\pi_\phi(s)$ is optimized to approximate the maximizer of the Q-function. Even in continuous control settings where the actor is a neural network, its gradient update direction is fundamentally driven by $\nabla_a \Q(s, \pi_\phi(s))$. 
Consequently, the geometric regularity of the explicit maximizer $\astar(s) = \arg\max_{a} \Q(s,a)$ imposes a fundamental limit on the policy: the parameterized actor $\pi_\phi$ cannot exceed the smoothness of $\astar$ without deviating from the optimal action and incurring a performance penalty.
We formally characterize the local sensitivity of the optimal action via the following lemma.

\begin{lemma}[Implicit Policy Jacobian]
\label{lem:ift}
Let $\Q: \mathcal{S} \times \mathcal{A} \to \mathbb{R}$ be a twice continuously differentiable function. Assume that for a given state $s$, $\astar(s)$ corresponds to a strict local maximum and is an interior point of $\mathcal{A}$, such that the action Hessian $\Haa{s,\astar(s)}$ is negative definite. Then, the sensitivity of the optimal action with respect to the state, denoted by the policy Jacobian $J_{\pi}(s) = \nabla_s \astar(s)$, is analytically given by:
\begin{equation}
    J_{\pi}(s) = - \left[ \nabla^2_{aa} \Q(s, \astar(s)) \right]^{-1} \nabla^2_{sa} \Q(s, \astar(s)).
    \label{eq:policy_jacobian}
\end{equation}
\end{lemma}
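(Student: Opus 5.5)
The plan is to apply the Implicit Function Theorem to the first-order optimality condition characterizing $\astar(s)$. Define $F:\mathcal{S}\times\mathcal{A}\to\mathbb{R}^{\dim\mathcal{A}}$ by $F(s,a)=\Qa{s,a}$. Because $\astar(s_0)$ is an interior strict local maximizer of the $C^2$ function $\Q(s_0,\cdot)$, the necessary first-order condition $F(s_0,\astar(s_0))=0$ holds. Since $\Q\in C^2$, the map $F$ is $C^1$. Its partial Jacobian in $a$ is $\Haa{s_0,\astar(s_0)}$, which is negative definite by hypothesis and hence invertible.

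By the IFT there are neighborhoods $U\ni s_0$ and $V\ni\astar(s_0)$ and a unique $C^1$ map $g:U\to V$ with $g(s_0)=\astar(s_0)$ and $F(s,g(s))=0$ on $U$. The key step, and I expect the main obstacle, is to identify $g$ with the local maximizer $\astar$ near $s_0$, so that the Jacobian of $g$ really is $J_\pi$. I would argue this as follows.

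\begin{itemize}
\item By continuity of the Hessian, shrink $U$ and $V$ so that $\Haa{s,a}$ stays negative definite on $U\times \overline V$.
\item Then for each $s\in U$ the function $\Q(s,\cdot)$ is strictly concave on a convex $V$.
\item Its critical point $g(s)$ is therefore the unique maximizer of $\Q(s,\cdot)$ over $V$.
\end{itemize}

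This makes $\astar(s)$, interpreted as the locally tracked strict maximizer (the branch continuing from $\astar(s_0)$), coincide with $g(s)$ on $U$. I would state explicitly that the lemma concerns this local branch: a global $\arg\max$ could jump to a different peak, and no differentiability claim is possible at such a jump.

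With $\astar=g$ differentiable, differentiate the identity $\Qa{s,\astar(s)}=0$ in $s$ using the chain rule:
\begin{equation*}
\nabla^2_{sa}\Q(s,\astar(s)) + \Haa{s,\astar(s)}\,\nabla_s\astar(s) = 0 .
\end{equation*}
Here $\nabla^2_{sa}\Q$ is taken to mean the $\dim\mathcal{A}\times\dim\mathcal{S}$ matrix $\partial_s(\nabla_a\Q)$. I would fix this orientation convention explicitly, since the paper's notation leaves it implicit.

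Finally, multiply on the left by $[\Haa{s,\astar(s)}]^{-1}$, which exists by negative definiteness. This yields \eqref{eq:policy_jacobian}. The same formula is also the IFT's explicit expression $Dg=-(D_aF)^{-1}D_sF$, so the final step can alternatively be read off directly from the theorem.
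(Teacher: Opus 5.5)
Your proposal is correct and follows the same route as the paper. Both arguments impose the first-order condition $\nabla_a Q_\theta(s,a^*(s))=0$, differentiate it totally in $s$, and invert the negative definite action Hessian. The difference is that you explicitly invoke the Implicit Function Theorem and identify its $C^1$ branch with the local maximizer via strict concavity on a convex neighborhood. This supplies the differentiability of $a^*$ that the paper's proof takes for granted, and your caveat that a global $\arg\max$ may jump between peaks is warranted by the statement as written.
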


\begin{proof}
Since $\astar(s)$ is an interior extremum, it must satisfy the first-order optimality condition $\nabla_a \Q(s, \astar(s)) = 0$. We consider this gradient as a vector-valued mapping $G(s, \astar(s)) = 0$. Applying the total derivative with respect to $s$ via the chain rule yields:
\begin{equation}
\begin{split}
    \frac{d}{ds} G(s, \astar(s)) &= \nabla^2_{sa} \Q(s, \astar(s)) \\
    &\quad + \nabla^2_{aa} \Q(s, \astar(s)) \nabla_s \astar(s) = 0.
\end{split}
\end{equation}
Given the strict concavity assumption, the Hessian $\Haa{s,\astar(s)}$ is invertible. Rearranging the terms, we obtain:
\begin{equation}
    \nabla^2_{aa} \Q(s, \astar(s)) \nabla_s \astar(s) = - \nabla^2_{sa} \Q(s, \astar(s)).
\end{equation}
Pre-multiplying by the inverse Hessian $\left[ \nabla^2_{aa} \Q(s, \astar(s)) \right]^{-1}$ yields the expression in \Cref{eq:policy_jacobian}.
\end{proof}

\begin{remark}
Lemma~\ref{lem:ift} requires $\astar(s)$ to be an interior strict local maximum. When optimal actions lie on the boundary of $\mathcal{A}$ or are not unique, the implicit function theorem does not apply and $J_\pi(s)$ may not be well-defined. In practice, continuous-control environments with $\tanh$-bounded actions rarely saturate the boundary, and our empirical results (Sec.~6) confirm that the regularization motivated by this analysis is effective even when the assumption is not universally satisfied.
\end{remark}

Equation~\eqref{eq:policy_jacobian} offers a crucial geometric intuition: policy sensitivity is the product of the \emph{inverse curvature} and the \emph{mixed-partial coupling}.
Intuitively, $\nabla^2_{sa} \Q$ acts as a forcing term that dictates how the ascent direction shifts, while $[\nabla^2_{aa} \Q]^{-1}$ acts as an amplification factor determined by the landscape's flatness.
If the Q-surface is flat (low curvature), the inverse Hessian explodes, rendering the policy hypersensitive to even negligible gradient rotations.
This factorization provides the rigorous mathematical basis for our proposed regularization strategy.

\subsection{Spectral Bounds on Lipschitz Continuity}
\label{ssec:lipschitz_proof}

We now establish a concrete bound on the Lipschitz constant of the policy by analyzing the spectral norms of the constituent matrices in Lemma~\ref{lem:ift}.
\begin{proposition}[Lipschitz Continuity Bound]
\label{prop:lipschitz}
Let $\|\cdot\|_2$ denote the spectral norm. Suppose the mixed partial derivative is bounded by $\|\Hsa{s,\astar(s)}\|_2 \le M$ and the action Hessian satisfies the strict concavity condition $\lambda_{\max}(\Haa{s,\astar(s)}) \le -\mu < 0$ (where $\lambda_{\max}(\cdot)$ denotes the maximum eigenvalue). Then, the induced greedy policy is Lipschitz continuous with a constant $L$ satisfying:
\begin{equation}
    \| \astar(s) - \astar(s') \|_2 \le L \| s - s' \|_2, \quad \text{where } L \le \frac{M}{\mu}.
    \label{eq:lipschitz_ratio}
\end{equation}
\end{proposition}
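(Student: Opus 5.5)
The plan is to combine the pointwise Jacobian formula of Lemma~\ref{lem:ift} with a spectral bound on each factor, and then pass from a Jacobian bound to a Lipschitz bound by integrating along a path. Throughout, I read the hypotheses of the proposition as holding at every state in a convex region containing $s$ and $s'$ (or along the segment joining them). I also assume that $\astar(\cdot)$ is a single continuous branch of strict interior maximizers along that segment. The implicit function theorem, which underlies Lemma~\ref{lem:ift}, guarantees exactly this locally, because $\Haa{s,\astar(s)}$ is invertible wherever $\lambda_{\max} \le -\mu < 0$.

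\textbf{Step 1 (pointwise Jacobian bound).} By Lemma~\ref{lem:ift},
\[
J_\pi(s) = -[\Haa{s,\astar(s)}]^{-1}\Hsa{s,\astar(s)},
\]
so by submultiplicativity of the spectral norm,
\[
\|J_\pi(s)\|_2 \le \|[\Haa{s,\astar(s)}]^{-1}\|_2 \, M .
\]
Since $\Haa{}$ is symmetric (as $\Q$ is $C^2$) with every eigenvalue $\lambda_i \le -\mu$, each $|\lambda_i| \ge \mu$. The inverse has eigenvalues $1/\lambda_i$, and for a symmetric matrix the spectral norm equals the largest absolute eigenvalue. Hence $\|[\Haa{}]^{-1}\|_2 = 1/\min_i|\lambda_i| \le 1/\mu$, which gives $\|J_\pi(s)\|_2 \le M/\mu$.

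\textbf{Step 2 (Jacobian to Lipschitz).} Let $\gamma(t) = s' + t(s-s')$ for $t\in[0,1]$. Since $\astar$ is $C^1$ along $\gamma$ (again by the implicit function theorem), the fundamental theorem of calculus gives
\[
\astar(s)-\astar(s') = \int_0^1 J_\pi(\gamma(t))\,(s-s')\,dt .
\]
Taking norms and applying Step 1 yields
\[
\|\astar(s)-\astar(s')\|_2 \le \int_0^1 \|J_\pi(\gamma(t))\|_2\,dt\;\|s-s'\|_2 \le \frac{M}{\mu}\,\|s-s'\|_2 ,
\]
so the smallest valid constant satisfies $L \le M/\mu$.

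\textbf{Main obstacle.} The delicate point is Step 2's requirement that $\astar$ stays a single differentiable branch along the whole segment. If the argmax jumps between distinct local maxima, no Lipschitz bound can hold. I would handle this by stating the result locally, or on a convex set where the uniform bounds hold and the maximizer is unique and interior. Uniform negative definiteness, together with the implicit function theorem and a continuation argument, rules out bifurcation along the segment: a branch of nondegenerate maxima cannot terminate in the interior, since the Hessian stays invertible. The same caveat appears in the Remark following Lemma~\ref{lem:ift}.
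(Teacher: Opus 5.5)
Your proof is correct and follows essentially the same route as the paper: submultiplicativity applied to the Jacobian from Lemma~\ref{lem:ift}, the eigenvalue bound $\|[\nabla^2_{aa}Q_\theta]^{-1}\|_2 \le 1/\mu$, and then the mean value inequality, which your segment integral $a^*(s)-a^*(s')=\int_0^1 J_\pi(\gamma(t))(s-s')\,dt$ simply writes out explicitly. The extra hypotheses you state (a convex region, and $a^*$ being a single $C^1$ branch of unique interior maximizers along the segment) are exactly what the paper's appeal to the mean value inequality assumes without saying so, so making them explicit only sharpens the argument.
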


\begin{proof}
We analyze the spectral norm of the policy Jacobian derived in \Cref{lem:ift}. Leveraging the sub-multiplicative property of the spectral norm, we have:
\begin{equation}
\begin{split}
    \|J_{\pi}(s)\|_2 &\le \left\| \left[ \nabla_{aa}^2 \Q(s, \astar(s)) \right]^{-1} \right\|_2 \\
    &\quad \cdot \left\| \nabla_{sa}^2 \Q(s, \astar(s)) \right\|_2.
\end{split}
\end{equation}
\textbf{1. Curvature Bound:} The assumption $\lambda_{\max}(\Haa{s,\astar(s)}) \le -\mu$ implies that the spectrum of the Hessian lies entirely in $(-\infty, -\mu]$. Consequently, the eigenvalues of the inverse Hessian are bounded in magnitude by $1/\mu$. Thus:
\begin{equation}
    \left\| \left[ \nabla_{aa}^2 \Q \right]^{-1} \right\|_2 = \frac{1}{|\lambda_{\max}(\Haa{s,\astar(s)})|} \le \frac{1}{\mu}.
\end{equation}

\textbf{2. Mixed-Partial Bound:} By hypothesis, the sensitivity of the gradient field to state variations is bounded by $\left\| \nabla_{sa}^2 \Q \right\|_2 \le M$.

Substituting these bounds, we define the Lipschitz constant $L$ as the supremum of the policy Jacobian norm:
\begin{equation}
    L \triangleq \sup_{s} \|\nabla_s \astar(s)\|_2 \le \frac{M}{\mu}.
\end{equation}

Finally, applying the Mean Value Inequality for vector-valued differentiable functions, for any pair of states $s, s'$:
\begin{equation}
\begin{split}
    &\|\astar(s) - \astar(s')\|_2 \\
    &\quad \le \left( \sup_{z} \|\nabla_s \astar(z)\|_2 \right) \|s - s'\|_2 \\
    &\quad = L \|s - s'\|_2 \le \frac{M}{\mu} \|s - s'\|_2.
\end{split}
\end{equation}
This confirms that the policy is Lipschitz continuous with constant $L \le M/\mu$.
\end{proof}

\begin{remark}
The condition $\lambda_{\max}(\Haa{s,\astar(s)}) \le -\mu < 0$ may not hold everywhere for neural network critics. In our experiments, we find that unconstrained critics satisfy this condition in only 14--47\% of visited states (see Sec.~6). The bound $L \le M/\mu$ therefore serves as regularization motivation rather than a universal guarantee. PAVE's $\mathcal{L}_\text{Curv}$ improves satisfaction to 64--100\%.
\end{remark}

\textbf{Theoretical Implication.} The derived bound $L \le M/\mu$ highlights a critical insight: simply minimizing the gradient variance (reducing $M$) is insufficient if the curvature $\mu$ also vanishes. Conventional regularization methods often inadvertently flatten the Q-landscape, driving $\mu \to 0$, which can theoretically explode the sensitivity term $\|(\Haa{\cdot})^{-1}\|$. PAVE is explicitly formulated to minimize $M$ while ensuring $\mu$ remains strictly bounded away from zero.

\section{\pave: Policy-Aware Value-field Equalization}
\label{sec:pave}

We introduce PAVE, a critic-centric framework designed to explicitly enforce the geometric stability conditions derived in \Cref{sec:geometry}. 
Instead of constraining the actor, PAVE directly regularizes the value field to minimize the Lipschitz bound $L \le M/\mu$ and enforce trajectory consistency.
This is achieved via three synergistic objectives: (1) suppressing noise sensitivity ($M$) via \emph{Mixed-Partial Regularization}, (2) aligning temporal vector fields via \emph{Vector Field Consistency}, and (3) preserving curvature ($\mu$) via \emph{Curvature Preservation} to prevent geometric collapse.
The following losses are finite-difference proxies that incentivize the desired geometric properties rather than guaranteeing them (see the Remarks in Sec.~\ref{sec:geometry}).

\subsection{Mixed-Partial Regularization (MPR)}
\label{ssec:mpr}
To minimize the numerator $M$ in \Cref{eq:lipschitz_ratio}, it is necessary to suppress the magnitude of the mixed partial derivatives $\nabla^2_{sa} \Q$. However, explicit computation of the Hessian imposes a computational complexity of $\mathcal{O}(d^2)$, which is prohibitive for online RL.

We circumvent this by employing a finite-difference approximation grounded in Taylor expansion. Consider the perturbation of the action-gradient $\nabla_a \Q$ induced by a small state variation $\epsilon$:
\begin{equation}
    \Qa{s+\epsilon, a} = \Qa{s, a} + \Hsa{s, a} \epsilon + \mathcal{O}(\|\epsilon\|^2).
\end{equation}
Neglecting higher-order terms, the difference $\|\Qa{s+\epsilon, a} - \Qa{s, a}\|$ serves as an efficient proxy for the Hessian-vector product $\|\Hsa{s,a}\epsilon\|$. Accordingly, we define the Mixed-Partial Regularization (MPR) objective as:
\begin{equation}
\begin{split}
    \mathcal{L}_{\mathrm{MPR}}(\theta) = \mathbb{E}_{\substack{(s,a)\sim\mathcal{D} \\ \epsilon \sim \mathcal{N}(0, \sigma^2 I)}} \Big[ \big\| &\Qa{s+\epsilon, a} \\
    &- \Qa{s, a} \big\|_2^2 \Big].
\end{split}
\label{eq:lmpr_fd}
\end{equation}
Minimizing this objective enforces local invariance of the vector field $\nabla_a \Q$ with respect to state perturbations, effectively compressing the spectral norm $M$ and widening the region of attraction for the optimal action.

\paragraph{Formal Justification.} We further provide a mathematical justification for why minimizing \Cref{eq:lmpr_fd} encourages a smaller $M$. By substituting the linear approximation into the objective and taking the expectation over the isotropic Gaussian noise $\epsilon$, we obtain:
\begin{equation}
\begin{split}
    \mathcal{L}_{\mathrm{MPR}} &\approx \mathbb{E}_\epsilon \left[ \left\| \Hsa{s, a} \epsilon \right\|_2^2 \right] \\
    &= \mathbb{E}_\epsilon \left[ \epsilon^\top (\Hsa{s,a})^\top \Hsa{s,a} \epsilon \right] \\
    &= \sigma^2 \left\| \Hsa{s,a} \right\|_F^2.
\end{split}
\label{eq:mpr_justification}
\end{equation}
Since the spectral norm is upper-bounded by the Frobenius norm ($\|A\|_2 \le \|A\|_F$), minimizing $\mathcal{L}_{\mathrm{MPR}}$ encourages a smaller upper bound on $M$. Note that this is a finite-difference proxy that incentivizes the desired geometric property rather than guaranteeing it.

\subsection{Vector Field Consistency (VFC)}
\label{ssec:vfc}
While MPR enforces spatial smoothness via isotropic perturbations, stable robotic control necessitates \emph{temporal} coherence along trajectories. Ideally, the learning signal driving the actor should not exhibit high-frequency fluctuations between consecutive time steps~\cite{kwak2026enhancingcontrolpolicysmoothness}.

To formalize this, we draw inspiration from Score Matching~\cite{hyvarinen2005estimation}. By interpreting the gradient $\nabla_a \Q(s, a)$ as the score function of an implicit Boltzmann policy $p(a|s) \propto \exp(\Q(s,a))$, we frame temporal stability as minimizing the distributional shift of the policy. Specifically, we minimize the Euclidean distance between the score fields induced by consecutive states $s_t$ and $s_{t+1}$, which is analogous to minimizing the Fisher Divergence:
\begin{equation}
\begin{split}
    \mathcal{L}_{\mathrm{VFC}}(\theta) = \mathbb{E}_{(s_t, a_t, s_{t+1})\sim\mathcal{D}} \Big[ \big\| &\Qa{s_t, a_t} \\
    &- \Qa{s_{t+1}, a_t} \big\|_2^2 \Big].
\end{split}
\label{eq:lvfc}
\end{equation}
This objective aligns the direction of policy improvement along the temporal dimension, effectively mitigating the ``chattering" effect caused by conflicting gradients at adjacent time steps.

\paragraph{Formal Justification.} Similar to MPR, VFC also encourages a smaller $M$, but specifically along dynamics-relevant directions. Considering the first-order expansion $s_{t+1} \approx s_t + \Delta s_t$, the objective approximates the directional Hessian norm:
\begin{equation}
\begin{split}
    \|\Qa{s_{t+1}, a_t} - \Qa{s_t, a_t}\|_2^2 \approx \\ \| \Hsa{s_t, a_t} \Delta s_t \|_2^2.
\end{split}
\label{eq:vfc_justification}
\end{equation}
While MPR reduces $M$ globally, VFC specifically encourages smaller $M$ along the state transitions imposed by the environment dynamics ($\Hsa \Delta s_t$).

\begin{algorithm}[t]
\caption{\pave: Policy-Aware Value-field Equalization}
\label{alg:pave}
\begin{algorithmic}[1]
\STATE Input: Hyperparameters $\lambda_1, \lambda_2, \lambda_3$, noise scale $\sigma$, margin $\delta$
\STATE Initialize: Critic $\theta$, Actor $\phi$, Target networks $\theta', \phi'$, Replay buffer $\mathcal{D}$
\FOR{each environment step}
    \STATE Sample action $a_t \sim \pi_\phi(s_t)$ (with exploration noise)
    \STATE Execute $a_t$, observe $r_t, s_{t+1}$, store $(s_t, a_t, r_t, s_{t+1})$ in $\mathcal{D}$
    \FOR{each gradient step}
        \STATE Sample batch $\mathcal{B} = \{(s, a, r, s')\} \sim \mathcal{D}$
        
        \STATE // 1. Standard TD Learning
        \STATE Compute $\mathcal{L}_{\mathrm{TD}}(\theta)$ using targets $\theta', \phi'$
        
        \STATE // 2. Mixed-Partial Regularization (MPR)
        \STATE Sample perturbations $\epsilon \sim \mathcal{N}(\mathbf{0}, \sigma^2 I)$
        \STATE $\mathcal{L}_{\mathrm{MPR}} \leftarrow \frac{1}{|\mathcal{B}|} \sum \|\Qa{s+\epsilon,a} - \Qa{s,a}\|_2^2$
        
        \STATE // 3. Vector Field Consistency (VFC)
        \STATE $\mathcal{L}_{\mathrm{VFC}} \leftarrow \frac{1}{|\mathcal{B}|} \sum \|\Qa{s,a} - \Qa{s',a}\|_2^2$
        
        \STATE // 4. Curvature Preservation (Curv)
        \STATE Sample vectors $v \sim \text{Rademacher}(|\mathcal{A}|)$
        \STATE $Tr_{est} \leftarrow v^\top \Haa{s,a} v$
        \STATE $\mathcal{L}_{\mathrm{Curv}} \leftarrow \frac{1}{|\mathcal{B}|} \sum \max(0, Tr_{est} + \delta)$
        
        \STATE // 5. Joint Update
        \STATE $\mathcal{L}_{\mathrm{Total}} \leftarrow \mathcal{L}_{\mathrm{TD}} + \lambda_1\mathcal{L}_{\mathrm{MPR}} + \lambda_2\mathcal{L}_{\mathrm{VFC}} + \lambda_3\mathcal{L}_{\mathrm{Curv}}$
        \STATE $\theta \leftarrow \theta - \eta \nabla_\theta \mathcal{L}_{\mathrm{Total}}$
        
        \STATE Update actor $\phi$ via $\nabla_\phi J(\phi)$
    \ENDFOR
\ENDFOR
\end{algorithmic}
\end{algorithm}

\begin{table*}[t]
    \centering
    \caption{Cumulative return (\emph{re}) and smoothness score (\emph{sm}) on Gymnasium benchmark under TD3 setting (SiLU-unified, 5 seeds). Higher \emph{re} and lower \emph{sm} are better. Bold indicates best, and underlined the second-best, per environment. Standard deviations over 5 seeds shown in parentheses.}
    \label{tab:TD3_metrics}
    {\fontsize{9pt}{\baselineskip}\selectfont
    \renewcommand{\tabularxcolumn}[1]{m{#1}}
    \begin{tabularx}{\linewidth}{
        >{\centering\arraybackslash}m{1.3cm}
        *{6}{>{\centering\arraybackslash}X>{\centering\arraybackslash\columncolor[gray]{.97}}X}
    }
    \toprule
    \multirow{2}{*}{Method}
      & \multicolumn{2}{c}{LunarLander}
      & \multicolumn{2}{c}{Pendulum}
      & \multicolumn{2}{c}{Reacher}
      & \multicolumn{2}{c}{Ant}
      & \multicolumn{2}{c}{Hopper}
      & \multicolumn{2}{c}{Walker} \\
    \cmidrule(lr){2-3}\cmidrule(lr){4-5}\cmidrule(lr){6-7}
    \cmidrule(lr){8-9}\cmidrule(lr){10-11}\cmidrule(lr){12-13}
      & \emph{re}\,$\uparrow$ & \emph{sm}\,$\downarrow$
      & \emph{re}\,$\uparrow$ & \emph{sm}\,$\downarrow$
      & \emph{re}\,$\uparrow$ & \emph{sm}\,$\downarrow$
      & \emph{re}\,$\uparrow$ & \emph{sm}\,$\downarrow$
      & \emph{re}\,$\uparrow$ & \emph{sm}\,$\downarrow$
      & \emph{re}\,$\uparrow$ & \emph{sm}\,$\downarrow$ \\
    \midrule
TD3 Base & \shortstack{227.8\\(74.1)} & \shortstack{1.809\\(1.311)} & \shortstack{-168.7\\(77.9)} & \shortstack{1.590\\(0.571)} & \shortstack{\underline{-3.55}\\(1.32)} & \shortstack{0.053\\(0.014)} & \shortstack{4299\\(1499)} & \shortstack{2.039\\(0.400)} & \shortstack{3056\\(888)} & \shortstack{2.715\\(0.483)} & \shortstack{4834\\(423)} & \shortstack{1.990\\(0.208)} \\

CAPS & \shortstack{\underline{249.0}\\(40.3)} & \shortstack{0.702\\(0.291)} & \shortstack{-175.2\\(79.3)} & \shortstack{\underline{0.464}\\(0.165)} & \shortstack{-3.58\\(1.35)} & \shortstack{0.047\\(0.011)} & \shortstack{4624\\(1190)} & \shortstack{2.135\\(0.256)} & \shortstack{\textbf{3609}\\(33)} & \shortstack{1.919\\(0.373)} & \shortstack{\underline{4913}\\(1383)} & \shortstack{1.600\\(0.306)} \\

GRAD & \shortstack{245.1\\(73.4)} & \shortstack{\underline{0.669}\\(0.184)} & \shortstack{\textbf{-167.6}\\(75.6)} & \shortstack{0.689\\(0.139)} & \shortstack{\textbf{-3.55}\\(1.36)} & \shortstack{\underline{0.041}\\(0.012)} & \shortstack{\textbf{5538}\\(845)} & \shortstack{\underline{1.796}\\(0.187)} & \shortstack{\underline{3317}\\(648)} & \shortstack{\underline{1.642}\\(0.405)} & \shortstack{4788\\(783)} & \shortstack{\underline{1.366}\\(0.194)} \\

ASAP & \shortstack{168.4\\(147.0)} & \shortstack{2.055\\(1.300)} & \shortstack{-170.2\\(80.7)} & \shortstack{1.970\\(0.768)} & \shortstack{-3.56\\(1.35)} & \shortstack{0.051\\(0.014)} & \shortstack{\underline{4765}\\(1554)} & \shortstack{2.065\\(0.413)} & \shortstack{3229\\(553)} & \shortstack{1.991\\(0.259)} & \shortstack{4740\\(799)} & \shortstack{1.616\\(0.284)} \\

PAVE & \shortstack{\textbf{264.5}\\(24.9)} & \shortstack{\textbf{0.541}\\(0.290)} & \shortstack{\underline{-167.6}\\(77.5)} & \shortstack{\textbf{0.351}\\(0.118)} & \shortstack{-4.02\\(1.34)} & \shortstack{\textbf{0.039}\\(0.013)} & \shortstack{4649\\(1443)} & \shortstack{\textbf{1.768}\\(0.315)} & \shortstack{3305\\(461)} & \shortstack{\textbf{0.950}\\(0.285)} & \shortstack{\textbf{5563}\\(451)} & \shortstack{\textbf{1.272}\\(0.172)} \\
    \bottomrule
    \end{tabularx}
        }
\end{table*}

\subsection{Curvature Preservation (Curv)}
\label{ssec:curv}
Minimizing $\mathcal{L}_{\mathrm{MPR}}$ in isolation introduces a pathological risk: the neural network may collapse the Q-function to a trivial flat plane (i.e., $\nabla_a \Q \approx \mathbf{0}, \forall s,a$) to satisfy the smoothness constraint. As established in Proposition~\ref{prop:lipschitz}, if $\Q$ becomes flat, the term $\|(\Haa{s,a})^{-1}\|$ diverges, paradoxically amplifying policy sensitivity.

To preclude this ``over-smoothing" collapse, we must enforce the geometric constraint that the curvature $\mu$ remains positive. We propose Curvature Preservation (Curv), which penalizes the trace of the action Hessian if it fails to maintain sufficient concavity:
\begin{equation}
\mathcal{L}_{\mathrm{Curv}}
= \mathbb{E}_{\substack{(s,a)\sim\mathcal{D} \\ v \sim p(v)}}
\left[ \max\left(0,\; v^\top \Haa{s,a} v + \delta \right) \right],
\label{eq:lcurv}
\end{equation}
where $\delta > 0$ defines the minimum required sharpness. To ensure computational efficiency, we employ Hutchinson's trace estimator with random Rademacher vectors $v$. Note that we utilize SiLU activation in the critic to ensure $C^2$ continuity for valid Hessian computation.

\paragraph{Formal Justification.} This regularizer explicitly targets the denominator $\mu$ of the bound.
By penalizing projected curvature values $v^\top \Haa{s,a} v$ that exceed $-\delta$, this objective incentivizes concavity via Hutchinson's trace estimator, which controls the trace rather than individual eigenvalues.
This encourages the maximum eigenvalue to remain negative, helping to keep the inverse Hessian norm bounded and reducing sensitivity.

\subsection{Total Objective Formulation}
\label{ssec:total_objective}
The composite objective function for the critic parameter $\theta$ is formulated as a weighted sum of the TD error and the proposed geometric regularizers:
\begin{equation}
    \mathcal{L}(\theta) = \mathcal{L}_{\mathrm{TD}} + \lambda_1 \mathcal{L}_{\mathrm{MPR}} + \lambda_2 \mathcal{L}_{\mathrm{VFC}} + \lambda_3 \mathcal{L}_{\mathrm{Curv}},
    \label{eq:total_loss}
\end{equation}
where $\mathcal{L}_{\mathrm{TD}}$ denotes the standard temporal-difference regression objective. The complete training procedure is summarized in Algorithm~\ref{alg:pave}.
Crucially, these geometric regularizers are applied solely as auxiliary losses to the critic. Consequently, the actor update mechanism remains unchanged, utilizing unmodified standard policy gradients while benefiting significantly from the paved, well-conditioned Q-gradient landscape.

\begin{table*}[t]
    \centering
    \caption{Cumulative return (\emph{re}) and smoothness score (\emph{sm}) on Gymnasium benchmark under SAC setting (SiLU-unified, 5 seeds). Higher \emph{re} and lower \emph{sm} are better. Bold indicates best, and underlined the second-best, per environment. Standard deviations over 5 seeds shown in parentheses.}
    \label{tab:sac_metrics}
    {\fontsize{9pt}{\baselineskip}\selectfont
    \renewcommand{\tabularxcolumn}[1]{m{#1}}
    \begin{tabularx}{\linewidth}{
        >{\centering\arraybackslash}m{1.3cm}
        *{6}{>{\centering\arraybackslash}X>{\centering\arraybackslash\columncolor[gray]{.97}}X}
    }
    \toprule
    \multirow{2}{*}{Method}
      & \multicolumn{2}{c}{LunarLander}
      & \multicolumn{2}{c}{Pendulum}
      & \multicolumn{2}{c}{Reacher}
      & \multicolumn{2}{c}{Ant}
      & \multicolumn{2}{c}{Hopper}
      & \multicolumn{2}{c}{Walker} \\
    \cmidrule(lr){2-3}\cmidrule(lr){4-5}\cmidrule(lr){6-7}
    \cmidrule(lr){8-9}\cmidrule(lr){10-11}\cmidrule(lr){12-13}
      & \emph{re}\,$\uparrow$ & \emph{sm}\,$\downarrow$
      & \emph{re}\,$\uparrow$ & \emph{sm}\,$\downarrow$
      & \emph{re}\,$\uparrow$ & \emph{sm}\,$\downarrow$
      & \emph{re}\,$\uparrow$ & \emph{sm}\,$\downarrow$
      & \emph{re}\,$\uparrow$ & \emph{sm}\,$\downarrow$
      & \emph{re}\,$\uparrow$ & \emph{sm}\,$\downarrow$ \\
    \midrule
SAC Base & \shortstack{160.3\\(135.3)} & \shortstack{0.434\\(0.187)} & \shortstack{\textbf{-163.1}\\(74.5)} & \shortstack{0.548\\(0.173)} & \shortstack{-3.73\\(1.37)} & \shortstack{0.053\\(0.016)} & \shortstack{5276\\(1029)} & \shortstack{1.941\\(0.289)} & \shortstack{\underline{3481}\\(98)} & \shortstack{0.773\\(0.069)} & \shortstack{\underline{4907}\\(283)} & \shortstack{0.748\\(0.063)} \\

CAPS & \shortstack{\textbf{270.9}\\(19.6)} & \shortstack{0.271\\(0.051)} & \shortstack{-166.1\\(76.9)} & \shortstack{\underline{0.338}\\(0.118)} & \shortstack{\underline{-3.69}\\(1.32)} & \shortstack{\underline{0.048}\\(0.014)} & \shortstack{5447\\(884)} & \shortstack{1.843\\(0.241)} & \shortstack{3442\\(52)} & \shortstack{0.660\\(0.110)} & \shortstack{4689\\(178)} & \shortstack{0.776\\(0.071)} \\

GRAD & \shortstack{256.6\\(38.2)} & \shortstack{0.252\\(0.063)} & \shortstack{-163.7\\(74.5)} & \shortstack{0.377\\(0.111)} & \shortstack{-3.74\\(1.29)} & \shortstack{\textbf{0.046}\\(0.013)} & \shortstack{\textbf{5729}\\(604)} & \shortstack{1.632\\(0.103)} & \shortstack{3315\\(407)} & \shortstack{0.581\\(0.072)} & \shortstack{4887\\(211)} & \shortstack{\underline{0.572}\\(0.052)} \\

ASAP & \shortstack{\underline{268.6}\\(20.8)} & \shortstack{\underline{0.183}\\(0.033)} & \shortstack{\underline{-163.5}\\(74.7)} & \shortstack{0.362\\(0.109)} & \shortstack{\textbf{-3.66}\\(1.31)} & \shortstack{0.048\\(0.015)} & \shortstack{\underline{5707}\\(852)} & \shortstack{\textbf{1.404}\\(0.088)} & \shortstack{3077\\(715)} & \shortstack{\textbf{0.404}\\(0.059)} & \shortstack{4731\\(209)} & \shortstack{\textbf{0.557}\\(0.069)} \\

PAVE & \shortstack{265.8\\(21.4)} & \shortstack{\textbf{0.142}\\(0.028)} & \shortstack{-165.4\\(75.5)} & \shortstack{\textbf{0.290}\\(0.130)} & \shortstack{-3.71\\(1.31)} & \shortstack{0.052\\(0.014)} & \shortstack{5706\\(774)} & \shortstack{\underline{1.604}\\(0.146)} & \shortstack{\textbf{3489}\\(38)} & \shortstack{\underline{0.556}\\(0.139)} & \shortstack{\textbf{4954}\\(250)} & \shortstack{0.584\\(0.020)} \\
    \bottomrule
    \end{tabularx}
        }

\end{table*}

\begin{figure*}[tp]
    \centering
    \setlength{\tabcolsep}{2pt} 
    \renewcommand{\arraystretch}{0.5}

    \begin{tabularx}{\textwidth}{XXXXX}
        \centering \textbf{Base} & \centering \textbf{CAPS} & \centering \textbf{GRAD} & \centering \textbf{ASAP} & \centering \textbf{PAVE} \tabularnewline
        

        \includegraphics[width=\linewidth]{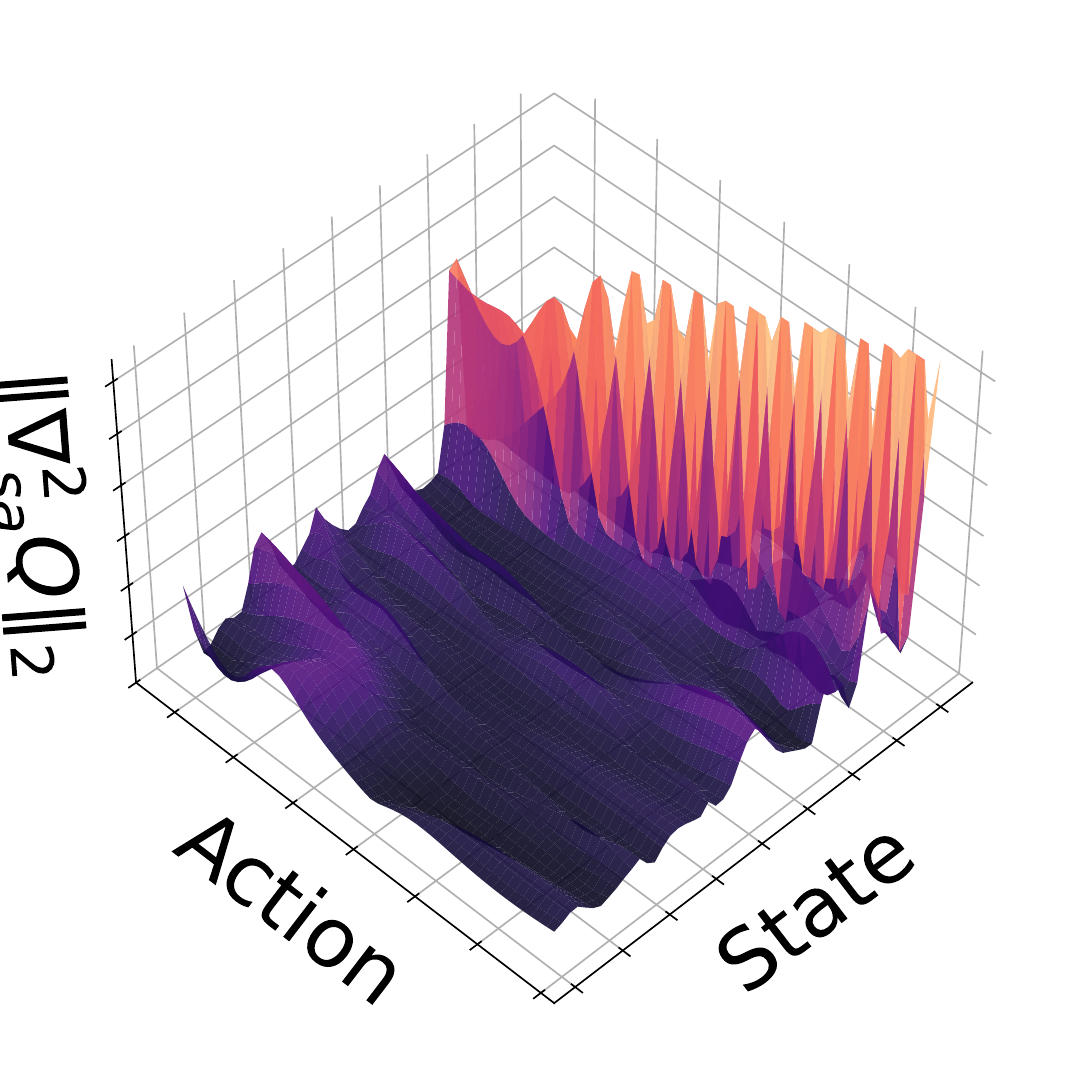} &
        \includegraphics[width=\linewidth]{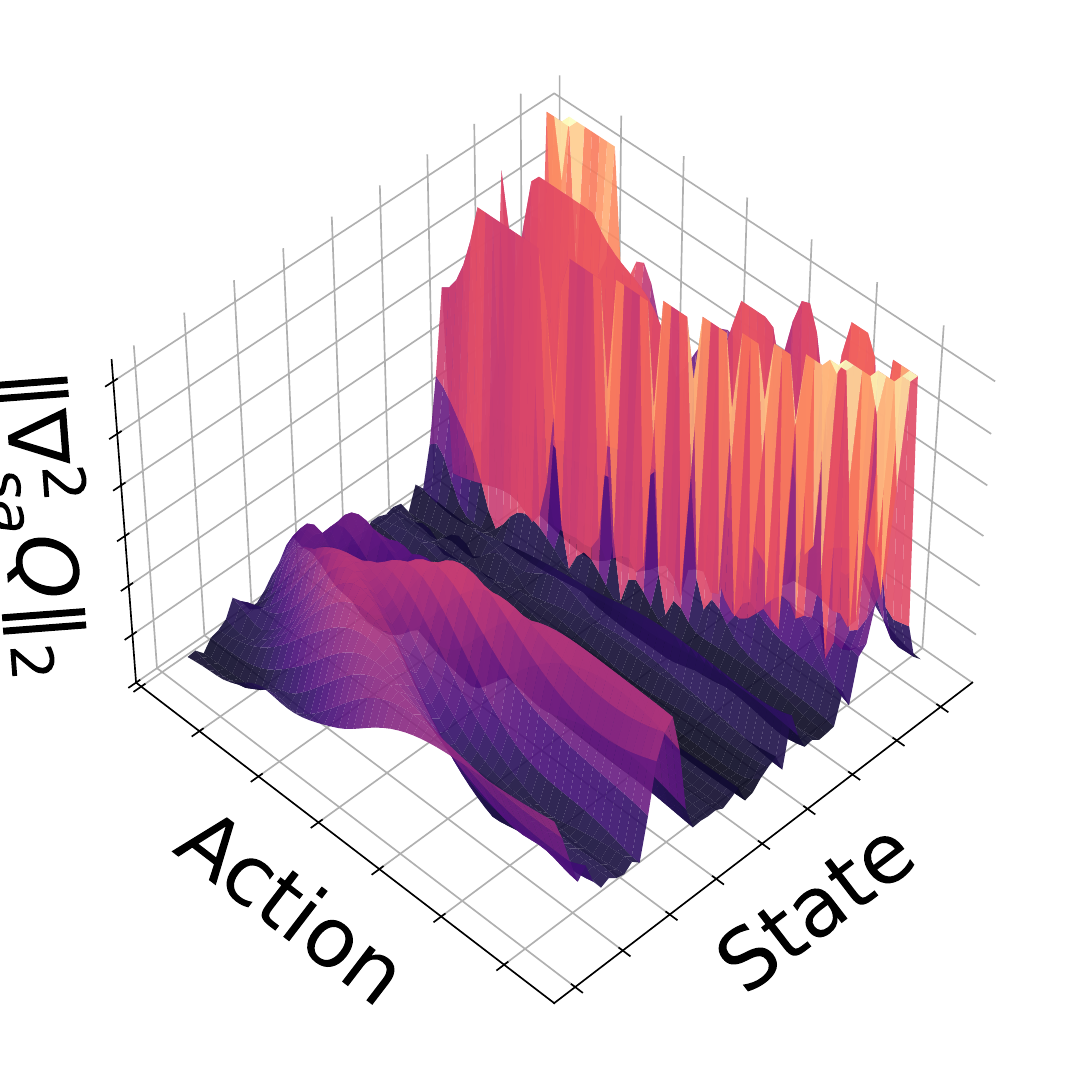} &
        \includegraphics[width=\linewidth]{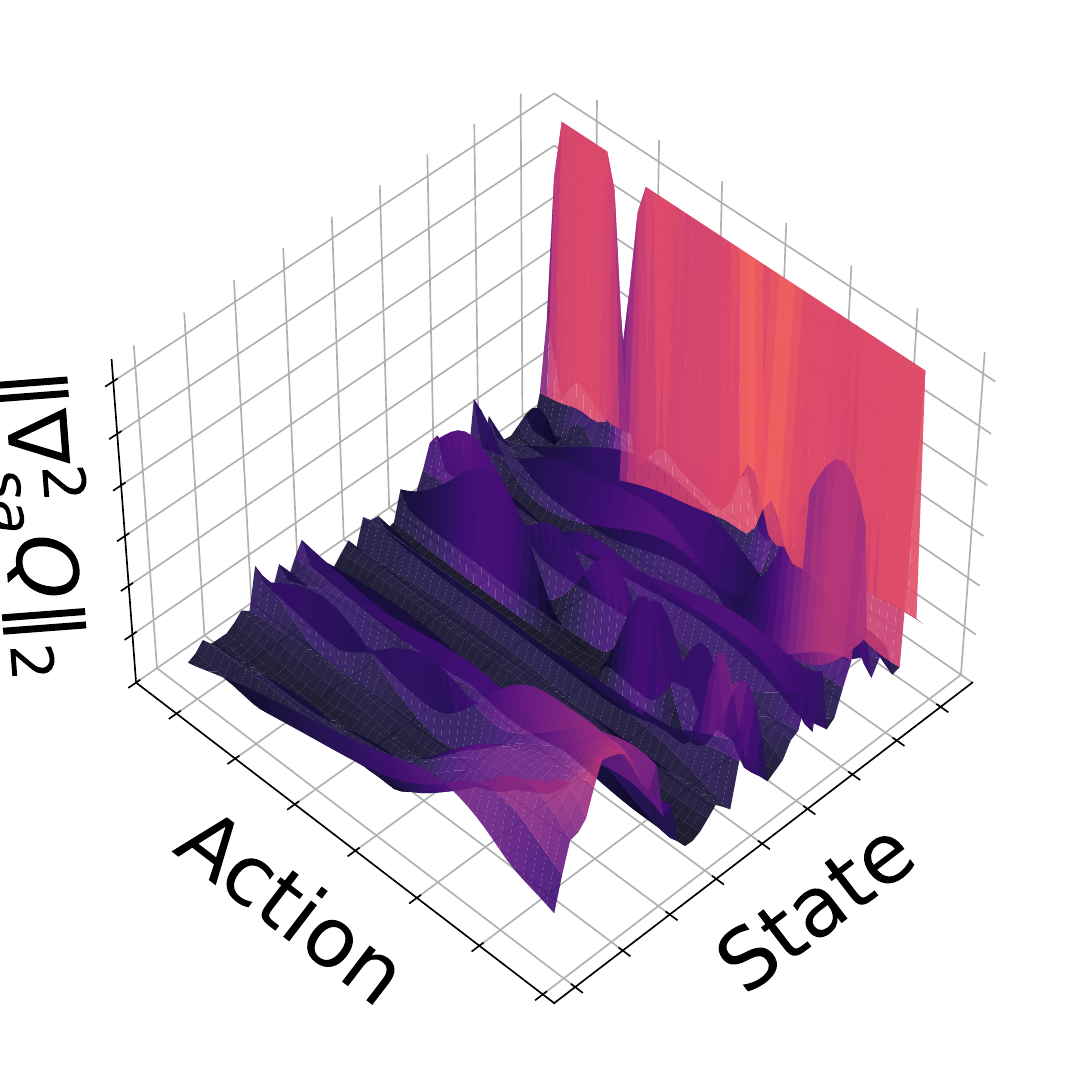} &
        \includegraphics[width=\linewidth]{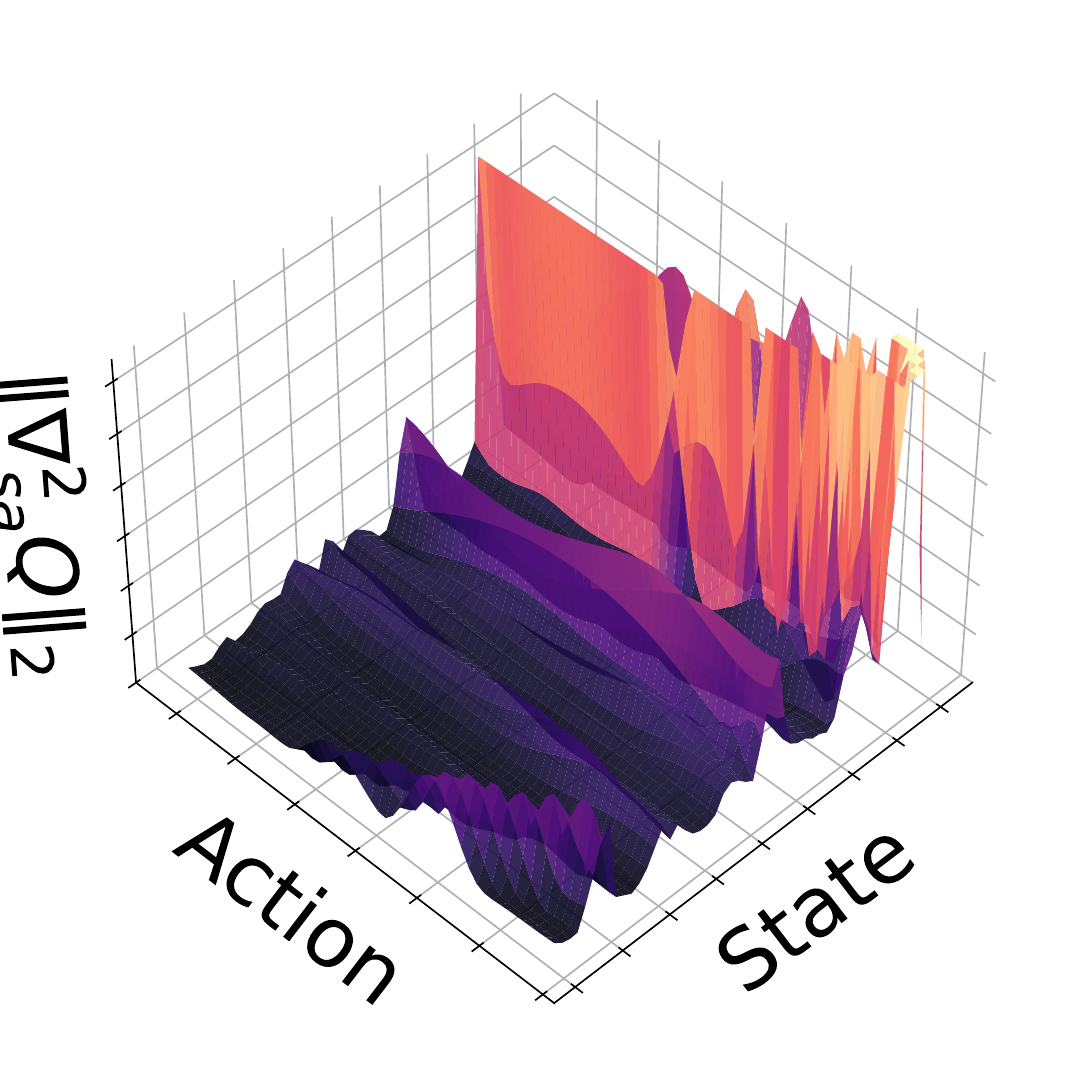} &
        \includegraphics[width=\linewidth]{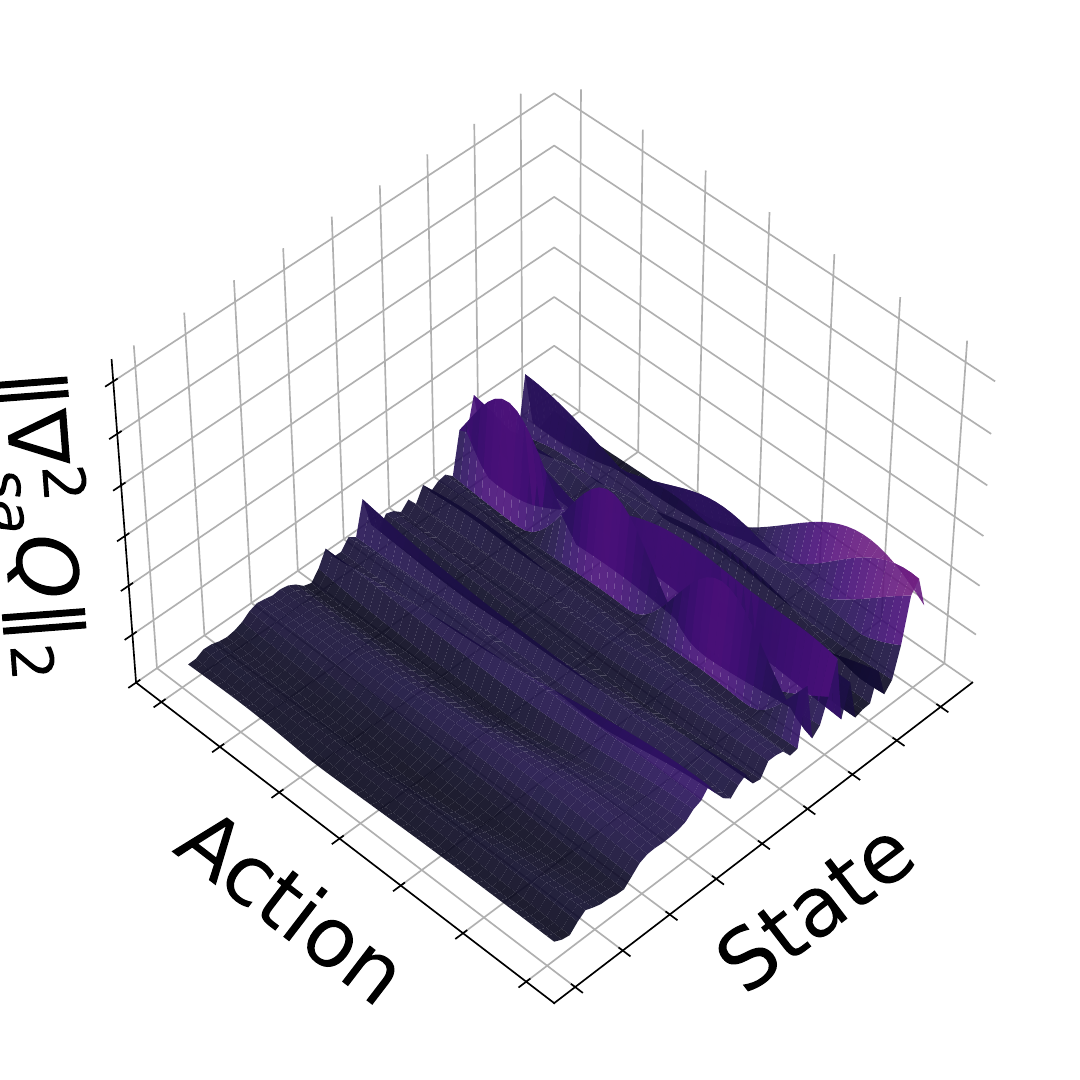} \tabularnewline
    \end{tabularx}

    \caption{3D visualization of the mixed-partial Hessian spectral norm $\|\nabla_{sa}^2 Q\|_2$ in Walker2d (SiLU-unified, autograd exact). All methods share the same color scale (Z-axis clipped at Base 99th percentile). PAVE effectively stabilizes the Q-gradient field compared to baselines.}
    \label{fig:main_q_gradient}
\end{figure*}

\section{Empirical Analysis}
\label{sec:experiments}


\subsection{Experimental Setup}

Experiments were conducted using Gymnasium~\cite{towers2024gymnasium} and MuJoCo~\cite{todorov2012mujoco} across six environments: \textit{LunarLanderContinuous-v2}, \textit{Pendulum-v1}, \textit{Reacher-v2}, \textit{Ant-v5}, \textit{Hopper-v5}, and \textit{Walker2d-v5}. This suite spans low-dimensional stabilization to high-dimensional articulated locomotion, providing a diverse testbed for evaluating both task performance and policy smoothness.
For evaluation metrics, we use Cumulative Return (\emph{re}) to assess task competency and Smoothness Score (\emph{sm}) to quantify action oscillations. The latter is a spectral smoothness metric derived via Fast Fourier Transform (FFT) analysis, following established protocols~\cite{mysore2021regularizing, christmann2024benchmarking}.
We built the codebase on top of Stable-Baselines3~\cite{raffin2021sb3}, building upon the implementation provided in~\cite{kwak2026enhancingcontrolpolicysmoothness}.
We integrated PAVE into two widely-used off-policy algorithms, Twin Delayed Deep Deterministic policy gradient (TD3)~\cite{fujimoto2018addressing} and Soft Actor-Critic (SAC)~\cite{haarnoja2018soft}. 
Regarding network architectures, all methods use SiLU~\cite{elfwing2018sigmoid} activation for fair comparison. SiLU is a $C^2$-continuous approximation of ReLU, which is required by $\mathcal{L}_\text{Curv}$ for valid Hessian computation via Hutchinson's trace estimator.
We trained our method with five independent random seeds.

\subsection{Analysis of Geometric Regularization} 
In TD3 (Table~\ref{tab:TD3_metrics}), our results validated the theoretical premise that stabilizing the Q-gradient field can naturally induce policy smoothness without sacrificing performance.
PAVE achieved the best smoothness across all six environments---with the most dramatic reductions on Hopper ($2.715\!\to\!0.950$, $2.9\times$) and Pendulum ($1.590\!\to\!0.351$, $4.5\times$)---while maintaining or improving Base returns in 5/6 environments.
On Walker, PAVE simultaneously secured the highest cumulative return ($4834\!\to\!5563$) and the best smoothness score ($1.990\!\to\!1.272$).
This indicates that for deterministic policies, which are prone to overfitting to sharp gradient irregularities~\cite{matheron2019problem}, rectifying the critic's geometry is a highly effective strategy.

The SAC setting in Table~\ref{tab:sac_metrics} reveals environment-dependent trade-offs that complement the TD3 findings.
PAVE achieved the best smoothness on LunarLander ($0.142$ vs Base $0.434$) and Pendulum ($0.290$ vs Base $0.548$); on Reacher, where the action space is only $2$-dimensional, policy-side baselines (GRAD $0.046$, CAPS/ASAP $0.048$) edged out PAVE ($sm$ $0.052$) by small margins, consistent with direct policy constraints being sufficient for simpler control manifolds.
PAVE's distinct advantage emerged on higher-dimensional locomotion: Walker ($sm$ $0.748\!\to\!0.584$, $re$ $4907\!\to\!4954$) and Hopper ($sm$ $0.773\!\to\!0.556$, $re$ $3481\!\to\!3489$) saw substantial smoothness reductions while preserving Base returns, whereas ASAP achieved better absolute smoothness but at substantial return cost (Walker $re$ $4731$, Hopper $3077$, both well below Base).
On Ant, the three top methods clustered tightly: GRAD ($re$ $5729$, $sm$ $1.632$), ASAP ($sm$ $1.404$, $re$ $5707$), and PAVE ($re$ $5706$, $sm$ $1.604$).
PAVE's high-dimensional advantage stems from the more volatile curvature and frequent gradient spikes inherent in such state-action spaces, where critic-centric regularization smoothed the gradient field while preserving Base returns.

\begin{table*}[t!]
    \centering
    \caption{Performance under varying observation noise levels $\sigma \in \{0.01, 0.05, 0.1\}$ on the SiLU-unified TD3 setting (5 seeds). We compare the total average return (\textit{re}) and smoothness score (\textit{sm}) across various TD3-based regularization methods. Standard deviations shown in parentheses.}
    \label{tab:noise_robustness_compact}
    \small
    \setlength{\tabcolsep}{5pt}
    \renewcommand{\arraystretch}{0.9}
    \renewcommand{\tabularxcolumn}[1]{m{#1}}
    \begin{tabularx}{\linewidth}{
            >{\centering\arraybackslash}m{1.4cm}
            >{\centering\arraybackslash}X >{\centering\arraybackslash\columncolor[gray]{.97}}X
            >{\centering\arraybackslash}X >{\centering\arraybackslash\columncolor[gray]{.97}}X
            >{\centering\arraybackslash}X >{\centering\arraybackslash\columncolor[gray]{.97}}X
            >{\centering\arraybackslash}m{0.4cm}
            >{\centering\arraybackslash}X >{\centering\arraybackslash\columncolor[gray]{.97}}X
            >{\centering\arraybackslash}X >{\centering\arraybackslash\columncolor[gray]{.97}}X
            >{\centering\arraybackslash}X >{\centering\arraybackslash\columncolor[gray]{.97}}X
        }
        \toprule
        & \multicolumn{6}{c}{LunarLander} && \multicolumn{6}{c}{Walker} \\
        \cmidrule(lr){2-7} \cmidrule(lr){9-14}
        \multirow{2}{*}{Method} & \multicolumn{2}{c}{$0.01$} & \multicolumn{2}{c}{$0.05$} & \multicolumn{2}{c}{$0.1$} && \multicolumn{2}{c}{$0.01$} & \multicolumn{2}{c}{$0.05$} & \multicolumn{2}{c}{$0.1$} \\
        \cmidrule(lr){2-3} \cmidrule(lr){4-5} \cmidrule(lr){6-7} \cmidrule(lr){9-10} \cmidrule(lr){11-12} \cmidrule(lr){13-14}
         & \emph{re}\,$\uparrow$ & \textit{sm}\,$\downarrow$ & \emph{re}\,$\uparrow$ & \textit{sm}\,$\downarrow$ & \emph{re}\,$\uparrow$ & \textit{sm}\,$\downarrow$ && \emph{re}\,$\uparrow$ & \textit{sm}\,$\downarrow$ & \emph{re}\,$\uparrow$ & \textit{sm}\,$\downarrow$ & \emph{re}\,$\uparrow$ & \textit{sm}\,$\downarrow$ \\
        \midrule
        Base & \shortstack{213.2\\(44.0)} & \shortstack{1.930\\(0.798)} & \shortstack{209.6\\(40.2)} & \shortstack{1.968\\(0.684)} & \shortstack{189.4\\(20.3)} & \shortstack{2.501\\(0.607)} & & \shortstack{4837.0\\(420.9)} & \shortstack{2.012\\(0.191)} & \shortstack{4840.8\\(407.2)} & \shortstack{2.160\\(0.161)} & \shortstack{4687.9\\(384.7)} & \shortstack{2.386\\(0.133)} \\
        CAPS & \shortstack{230.7\\(25.5)} & \shortstack{0.908\\(0.262)} & \shortstack{227.0\\(31.3)} & \shortstack{1.173\\(0.440)} & \shortstack{213.1\\(16.5)} & \shortstack{1.387\\(0.263)} & & \shortstack{\underline{5146.4}\\(320.1)} & \shortstack{1.607\\(0.231)} & \shortstack{\underline{5250.0}\\(212.4)} & \shortstack{1.747\\(0.233)} & \shortstack{\underline{5224.0}\\(141.9)} & \shortstack{1.937\\(0.233)} \\
        GRAD & \shortstack{\underline{251.3}\\(17.5)} & \shortstack{\underline{0.755}\\(0.053)} & \shortstack{\underline{244.9}\\(11.5)} & \shortstack{\underline{1.080}\\(0.108)} & \shortstack{\underline{213.1}\\(26.3)} & \shortstack{1.664\\(0.166)} & & \shortstack{4949.7\\(303.1)} & \shortstack{\underline{1.389}\\(0.169)} & \shortstack{4908.8\\(313.7)} & \shortstack{\underline{1.547}\\(0.163)} & \shortstack{4917.4\\(323.0)} & \shortstack{\underline{1.828}\\(0.153)} \\
        ASAP & \shortstack{161.6\\(114.6)} & \shortstack{2.074\\(0.964)} & \shortstack{160.8\\(110.8)} & \shortstack{2.196\\(1.061)} & \shortstack{150.4\\(105.6)} & \shortstack{2.589\\(0.896)} & & \shortstack{4664.8\\(450.5)} & \shortstack{1.625\\(0.206)} & \shortstack{4784.2\\(446.2)} & \shortstack{1.806\\(0.193)} & \shortstack{4798.1\\(439.8)} & \shortstack{2.055\\(0.155)} \\
        PAVE & \shortstack{\textbf{254.5}\\(12.1)} & \shortstack{\textbf{0.633}\\(0.206)} & \shortstack{\textbf{247.8}\\(18.3)} & \shortstack{\textbf{0.733}\\(0.083)} & \shortstack{\textbf{219.1}\\(24.1)} & \shortstack{\textbf{1.257}\\(0.205)} & & \shortstack{\textbf{5569.2}\\(444.9)} & \shortstack{\textbf{1.296}\\(0.162)} & \shortstack{\textbf{5579.1}\\(435.4)} & \shortstack{\textbf{1.520}\\(0.116)} & \shortstack{\textbf{5537.1}\\(472.8)} & \shortstack{\textbf{1.856}\\(0.066)} \\
        \bottomrule
    \end{tabularx}
\end{table*}

\subsection{Qualitative Analysis of the Q-Gradient Field}
\label{sec:qualitative_analysis}

We qualitatively assessed the learning signal stability by visualizing the mixed-partial Hessian norm $\|\nabla_{sa}^2 Q\|$ in Figure~\ref{fig:main_q_gradient} across base TD3 and other regularization methods. 
The base TD3 and contemporary policy-side regularizers exhibited a ``jagged'' landscape characterized by numerous sharp spikes, representing a highly unstable learning signal. These jagged landscapes demonstrated that conventional policy-side regularization failed to mitigate the underlying geometric irregularities.
In contrast, PAVE effectively paved the gradient field, inducing a flat and consistent landscape. 
This visualization confirmed that PAVE successfully minimized the mixed-partial volatility ($M$) as intended by our theoretical objective.
A detailed description of the visualization protocol and corresponding plots for all six benchmark environments are provided in Appendix \ref{appendix:q_gradient_details}.

\subsection{Proxy Validation}
To quantitatively verify that the proxy losses approximated the intended Hessian quantities, we computed both the proxy (Eq.~\ref{eq:lmpr_fd}, finite-difference) and the exact analytic Hessian norm $\sigma^2\|\nabla^2_{sa}Q\|_F^2$ (Eq.~\ref{eq:mpr_justification}, autograd) on the same trained PAVE critic under the SiLU-unified setting at each timestep (6 environments, 5 seeds, 10 episodes). Additionally, we measured $M_\text{sup} = \sup\|\nabla^2_{sa}Q\|_2$ to verify end-to-end effectiveness. As summarized in Table~\ref{tab:proxy_validation}:
\begin{table}[t!]
    \centering
    \caption{MPR proxy validation: ratio of finite-difference proxy to exact analytic Hessian norm (1.0 = exact match), and the induced $M_\text{sup}$ reduction over Base. Absolute $M_\text{sup}$ values appear in Table~\ref{tab:bound_validation} and Appendix~\ref{appendix:hessian_measurements}.}
    \label{tab:proxy_validation}
    \small
    \begin{tabular}{lcc}
    \toprule
    Env & MPR Ratio & $M_\text{sup}$ Reduction \\
    \midrule
    Pend.  & 1.14 & 2.1$\times$ \\
    Hop.   & 1.04 & 9.5$\times$ \\
    Reach. & 1.02 & --- \\
    Walker & 0.91 & 3.0$\times$ \\
    Ant    & 0.72 & 1.4$\times$ \\
    Lunar  & 0.67 & 6.6$\times$ \\
    \bottomrule
    \end{tabular}
\end{table}
MPR ratios ranged from 0.67 to 1.14, confirming that the proxy approximated the Hessian norm. Despite the approximation gap, PAVE reduced $M_\text{sup}$ in 5/6 environments (up to 9.5$\times$), validating that the proxy served its intended role as an optimization signal for reducing $M$. Full validation including VFC is provided in Appendix~\ref{appendix:proxy_validation}.

\subsection{Empirical Validation of the Theory}
\label{ssec:empirical_validation}
The bound $L\le M/\mu$ predicts that controlling \emph{both} terms is necessary: reducing $M$ alone is insufficient if $\mu$ collapses. We verified this on TD3 along three independent angles, summarized jointly in Table~\ref{tab:bound_validation}.
(i) PAVE directly suppressed $M$: $M_\text{sup}\!=\!\sup\|\nabla^2_{sa}Q\|_2$ dropped substantially over Base in 5/6 environments (Table~\ref{tab:bound_validation}, $M_\text{sup}$ column; full per-method comparison in Appendix~\ref{appendix:hessian_measurements}).
(ii) PAVE preserved $\mu$: the strict-concavity rate (fraction of $(s,a)$ at which $\lambda_{\max}(\nabla^2_{aa}Q)\!<\!0$) rose from $14$--$47\%$ under unregularized critics to $64$--$100\%$ under PAVE (Table~\ref{tab:bound_validation}, Neg.Def column), and this gap held \emph{from early training onwards}, not merely at convergence (Appendix~\ref{appendix:during_training}).
(iii) The action-gradient field became temporally consistent: PAVE substantially suppressed the cosine flip rate of $\nabla_a Q$ along rollouts (Sec.~\ref{ssec:cosine_similarity}).
A complementary failure-mode analysis against critic-side baselines (Spectral Normalization, Gradient Penalty) is deferred to Appendix~\ref{appendix:critic_baselines}.

\begin{table}[t]
\centering
\caption{Empirical validation of $L\!\le\!M/\mu$: PAVE simultaneously suppresses $M_\text{sup}$ and lifts the strict-concavity rate (TD3, SiLU-unified, 5 seeds $\times$ 10 episodes). Full per-method results in Appendix~\ref{appendix:hessian_measurements}.}
\label{tab:bound_validation}
\small
\setlength{\tabcolsep}{4pt}
\begin{tabular}{lcc}
\toprule
Env & $M_\text{sup}\!\downarrow$ Base / PAVE & Neg.Def$\uparrow$ Base / PAVE \\
\midrule
Lunar  & 990 / \textbf{151}   & 0.35 / \textbf{0.84} \\
Pend.  & 447 / \textbf{210}   & 0.47 / \textbf{1.00} \\
Reach. & \textbf{16} / 26     & \textbf{1.00} / 0.99 \\
Ant    & 8067 / \textbf{5962} & 0.17 / \textbf{0.66} \\
Hop.   & 8831 / \textbf{934}  & 0.26 / \textbf{0.86} \\
Walk.  & 1923 / \textbf{643}  & 0.14 / \textbf{0.64} \\
\bottomrule
\end{tabular}
\end{table}

\begin{table}[t]
\centering
\caption{Q-gradient temporal consistency on trained TD3 critics. cos\,$\uparrow$ = mean cosine similarity between consecutive $\nabla_a Q$; flip\,$\downarrow$ = fraction of timesteps with cos $<\!0$.}
\label{tab:cosine_similarity_main}
\small
\setlength{\tabcolsep}{4pt}
\begin{tabular}{lcccc}
\toprule
Env & Base cos$\uparrow$ & PAVE cos$\uparrow$ & Base flip$\downarrow$ & PAVE flip$\downarrow$ \\
\midrule
Lunar  & 0.593 & \textbf{0.901} & 0.184 & \textbf{0.033} \\
Pend.  & 0.491 & \textbf{0.954} & 0.254 & \textbf{0.023} \\
Reach. & 0.950 & \textbf{0.968} & 0.016 & \textbf{0.010} \\
Ant    & 0.153 & \textbf{0.183} & 0.355 & \textbf{0.333} \\
Hop.   & 0.566 & \textbf{0.755} & 0.165 & \textbf{0.073} \\
Walk.  & 0.635 & \textbf{0.835} & 0.090 & \textbf{0.014} \\
\bottomrule
\end{tabular}
\end{table}

\subsection{Q-Gradient Direction Stability}
\label{ssec:cosine_similarity}
A core hypothesis of PAVE is that the actor receives a geometrically unstable learning signal --- the gradient direction $\nabla_a Q$ flips between adjacent timesteps. We measured the cosine similarity between consecutive Q-gradients $\nabla_a Q(s_t, a_t)$ and $\nabla_a Q(s_{t+1}, a_{t+1})$ on trained TD3 critics (5 seeds $\times$ 10 episodes); a negative cosine indicates a gradient direction flip. As summarized in Table~\ref{tab:cosine_similarity_main}, PAVE substantially lifted the cosine alignment and suppressed flip rates.
PAVE reduced flip rates by $2$--$11\times$ in 4/6 environments (Pendulum $25.4\%\!\to\!2.3\%$, Walker $9.0\%\!\to\!1.4\%$, Lunar $18.4\%\!\to\!3.3\%$, Hopper $16.5\%\!\to\!7.3\%$), directly evidencing the predicted stabilization of the Q-gradient field. 

\begin{table}[t!]
    \centering
    \caption{Component analysis of each loss component in PAVE. We incrementally add the loss terms to the base TD3 algorithm to demonstrate their individual and combined effects.}
    \label{tab:ablation_matrix}
    \small
    \renewcommand{\arraystretch}{1.3} 
    
    \resizebox{\columnwidth}{!}{%
        \begin{tabular}{l c >{\columncolor[gray]{.97}}c c >{\columncolor[gray]{.97}}c}
        \toprule
        \multirow{2}{*}{Configuration} & \multicolumn{2}{c}{LunarLander} & \multicolumn{2}{c}{Walker} \\
        \cmidrule(lr){2-3} \cmidrule(lr){4-5}
        
         & \emph{re} $\uparrow$ & \emph{sm} $\downarrow$ & \emph{re} $\uparrow$ & \emph{sm} $\downarrow$ \\
        \midrule
        
        \raisebox{0.6em}{Base} 
        & \shortstack{227.8 \\ (74.1)} & \shortstack{1.809 \\ (1.311)} 
        & \shortstack{4834.3 \\ (423.0)} & \shortstack{1.990 \\ (0.208)} \\
        
        \raisebox{0.6em}{+ $\mathcal{L}_{\mathrm{MPR}}$}
        & \shortstack{223.6 \\ (74.6)} & \shortstack{1.748 \\ (1.187)} 
        & \shortstack{4947.2 \\ (486.7)} & \shortstack{1.841 \\ (0.191)} \\
        
        \raisebox{0.6em}{+ $\mathcal{L}_{\mathrm{MPR}} + \mathcal{L}_{\mathrm{VFC}}$}
        & \shortstack{237.4 \\ (58.7)} & \shortstack{1.177 \\ (0.461)} 
        & \shortstack{4648.2 \\ (291.8)} & \shortstack{1.497 \\ (0.241)} \\
        
        \raisebox{0.6em}{+ $\mathcal{L}_{\mathrm{MPR}} + \mathcal{L}_{\mathrm{VFC}} + \mathcal{L}_{\mathrm{Curv}}$}
        & \shortstack{\textbf{264.5} \\ (24.9)} & \shortstack{\textbf{0.541} \\ (0.290)} 
        & \shortstack{\textbf{5562.9} \\ (451.4)} & \shortstack{\textbf{1.272} \\ (0.172)} \\
        \bottomrule
        \end{tabular}%
    } 
\end{table}

\subsection{Evaluation under Noisy Observations}
We evaluated performance under scale-aware observation noise defined as $\delta \sim \mathcal{U}(-\sigma, \sigma) \odot \sigma_{\text{base}}$.
As summarized in Table~\ref{tab:noise_robustness_compact}, PAVE exhibited remarkable stability across varying noise levels.
Crucially, PAVE demonstrated performance invariance to noise intensity; in the Walker task, the agent maintained nearly identical returns of approximately 5550 even as the noise magnitude increased tenfold from $\sigma=0.01$ to $0.1$.
This indicated that the paved Q-gradient field effectively decoupled input perturbations from action generation.
Furthermore, while the smoothness score naturally increased with higher noise, PAVE exhibited a controlled and gradual degradation with scores ranging from 1.30 to 1.86 in the Walker task, whereas baseline methods often suffered from erratic control signal explosions or significant performance collapse under high-noise conditions.

\subsection{Component Analysis}
\label{ssec:component_analysis}
We analyzed the contribution of each component via an incremental ablation study.
As shown in Table~\ref{tab:ablation_matrix}, the addition of $\mathcal{L}_{\mathrm{MPR}}$ and $\mathcal{L}_{\mathrm{VFC}}$ progressively improved policy smoothness.
However, smoothness alone did not guarantee performance; notably, the Walker return dipped to 4648.2. 
Crucially, the integration of $\mathcal{L}_{\mathrm{Curv}}$ was indispensable. 
It not only recovered and boosted the return to 5562.9 but further enhanced smoothness. 
This result directly validated our theoretical bound $L \le M/\mu$ by proving that minimizing volatility alone was insufficient if curvature vanished; by preventing the landscape from flattening, PAVE ensured the inverse Hessian did not act as an amplification factor for policy sensitivity.
We provide a detailed sensitivity analysis, isolating the impact of each regularization coefficient, in Appendix~\ref{app:sensitivity}.

\subsection{Complexity Analysis}
\label{ssec:complexity}
While PAVE introduces geometric regularizers to stabilize the critic, it is designed to maintain linear computational complexity $\mathcal{O}(k+d)$, where $k$ and $d$ denote state and action dimensions, respectively. Unlike methods requiring explicit Hessian construction ($\mathcal{O}(d^2)$), PAVE utilizes finite difference approximations and Hutchinson's trace estimator to ensure scalability to high-dimensional tasks. A detailed complexity derivation is provided in Appendix~\ref{app:complexity}.
Although Table~\ref{tab:fps_main} shows lower per-step throughput, this overhead does not scale proportionally to wall-clock convergence: under TD3, PAVE converges in ${\sim}$1.0--1.4$\times$ the wall-clock time of Base (Appendix~\ref{appendix:convergence_time}).
Since PAVE does not alter the actor architecture, inference speed remains identical to that of the base algorithm.

\begin{table}[h]
    \centering
    \caption{Training Throughput (FPS) Comparison.}
    \label{tab:fps_main}
    \small
    \renewcommand{\arraystretch}{1.2} 
    
    \resizebox{\columnwidth}{!}{%
        \begin{tabular}{l ccccc}
        \toprule
        Environment & BASE & CAPS & GRAD & ASAP & PAVE \\
        \midrule
        LunarLander & 35.6 & 32.6 & 31.0 & 28.0 & 25.0 \\
        Walker    & 34.2 & 34.0 & 33.0 & 38.8 & 27.0 \\
        \bottomrule
        \end{tabular}%
    }
\end{table}

\section{Limitations}
Beyond the modest training overhead quantified in Sec.~\ref{ssec:complexity} (inference is unaffected since the actor is unchanged), the principal limitation is theoretical scope: the proxy losses incentivize---not guarantee---the geometric properties, and the strict concavity assumption holds in $64$--$100\%$ of states under PAVE versus $14$--$47\%$ under unregularized critics.

\section{Conclusion}
We identify the critic's irregular geometry as the root of policy instability and prove that policy sensitivity is governed by the ratio of the Q-function's mixed-partial volatility to its action-space curvature.
PAVE stabilizes this geometry through critic-side losses, and our experiments confirm that paving the Q-gradient field smooths policies without any actor-side constraint, substantiating a critic-centric perspective on continuous control.

\section*{Acknowledgments}
This work was supported 
in part by the National Research Foundation of Korea (NRF) grant funded by the Korean government (MSIT) under Grant No. RS-2025-00564137, 
in part by Convergence security core talent training business support program under Grant IITP-2023-RS-2023-00266615, and
in part by the Technology Innovation Program (RS-2025-25453780, Development of a National Humanoid AI Robot Foundation Model for Multi-Task Applications) funded by the Ministry of Trade Industry \& Resources (MOTIR, Korea).

\section*{Impact Statement}
This paper presents work whose goal is to advance the field of machine learning.
There are many potential societal consequences of our work, none of which we feel must be specifically highlighted here.

\bibliography{example_paper}
\bibliographystyle{icml2026}

\newpage
\appendix
\onecolumn

\section{Extended Analysis and Protocol for Q-Gradient Field Visualization}
\label{appendix:q_gradient_details}

\subsection{Visualization Protocol}
To qualitatively assess the geometric stability of the learning signal, we visualized the magnitude of the mixed-partial derivatives. 

\textbf{Dominant Axis Selection.} For each environment, we first identified the pair of state dimension $s_i$ and action dimension $a_j$ that exhibited the strongest interaction in the vanilla model. 
This was defined by scanning all combinations and selecting the pair with the maximum mixed-partial derivative magnitude. By fixing the visualization of all compared algorithms to these ``dominant axes'' identified from the vanilla baseline, we ensure a fair comparison by focusing on the regions most susceptible to instability.

\textbf{Exact Computation.} Under the SiLU-unified setting, the critic is $C^2$-continuous, enabling exact Hessian computation via autograd. At each grid point, we constructed the full $d_a \times d_s$ mixed Hessian $\nabla^2_{sa}Q$ by computing $\partial(\partial Q / \partial a_i) / \partial s$ for each action dimension, and extracted the spectral norm $\|\nabla^2_{sa}Q\|_2$ via singular value decomposition.
We swept the dominant state and action dimensions across a range of $[-1.0, 1.0]$ and $[-1.5, 1.5]$ respectively, centered at a reference state, on a $50 \times 50$ grid. For each environment, all methods share the same color scale (Z-axis clipped at the Base model's 99th percentile) to ensure fair visual comparison.

\subsection{Full Visualization Results Across Environments}
Figure~\ref{fig:q_field_comprehensive_td3} and~\ref{fig:q_field_comprehensive_sac} presented the comparative 3D landscapes of the mixed-partial Hessian norm $\|\nabla_{sa}^2 Q\|$ for all six benchmark environments. 
This cross-environment evaluation confirmed that the geometric instability of the critic—characterized by high-volatility gradient fields—was a universal phenomenon in standard actor-critic training. 
Across all tested environments, including LunarLander, Pendulum, Reacher, Ant, Hopper, and Walker, the vanilla and existing policy-side regularizers exhibited noticeable landscape irregularities and a dense population of sharp gradient spikes. 
These spikes indicated that the steepest ascent direction rotated violently even with infinitesimal state shifts, and notably, current policy-side smoothness regularizers like ASAP failed to mitigate these underlying geometric singularities. 
In contrast, PAVE demonstrated a consistent ability to remove these erratic spikes and stabilize the landscapes into smooth manifolds across the entire suite of environments.

\begin{figure*}[t!]
    \centering
    \setlength{\tabcolsep}{1pt} 
    
    \begin{tabularx}{0.98\textwidth}{p{0.02\textwidth} XXXXX}
        & \centering \textbf{TD3 Base} & \centering \textbf{CAPS} & \centering \textbf{GRAD} & \centering \textbf{ASAP} & \centering \textbf{PAVE} \tabularnewline
    \end{tabularx}
    \vspace{2pt}

    \newcommand{\qrow}[2]{
        \begin{minipage}{0.02\textwidth}
            \rotatebox[origin=c]{90}{\scriptsize \textbf{#1}}
        \end{minipage}%
        \begin{minipage}{0.96\textwidth}
            \centering
            \begin{subfigure}{0.19\textwidth}
                \includegraphics[width=\linewidth]{figures/camera_ready_q_gradient/vis_Heat_base_td3_#2_seed652165_idx0_3d_unified.pdf}
            \end{subfigure}%
            \begin{subfigure}{0.19\textwidth}
                \includegraphics[width=\linewidth]{figures/camera_ready_q_gradient/vis_Heat_caps_td3_#2_seed652165_idx0_3d_unified.pdf}
            \end{subfigure}%
            \begin{subfigure}{0.19\textwidth}
                \includegraphics[width=\linewidth]{figures/camera_ready_q_gradient/vis_Heat_grad_td3_#2_seed652165_idx0_3d_unified.pdf}
            \end{subfigure}%
            \begin{subfigure}{0.19\textwidth}
                \includegraphics[width=\linewidth]{figures/camera_ready_q_gradient/vis_Heat_asap_td3_#2_seed652165_idx0_3d_unified.pdf}
            \end{subfigure}%
            \begin{subfigure}{0.19\textwidth}
                \includegraphics[width=\linewidth]{figures/camera_ready_q_gradient/vis_Heat_pave_td3_#2_seed652165_idx0_3d_unified.pdf}
            \end{subfigure}
        \end{minipage} \vspace{4pt} \\
    }

    \qrow{Lunar}{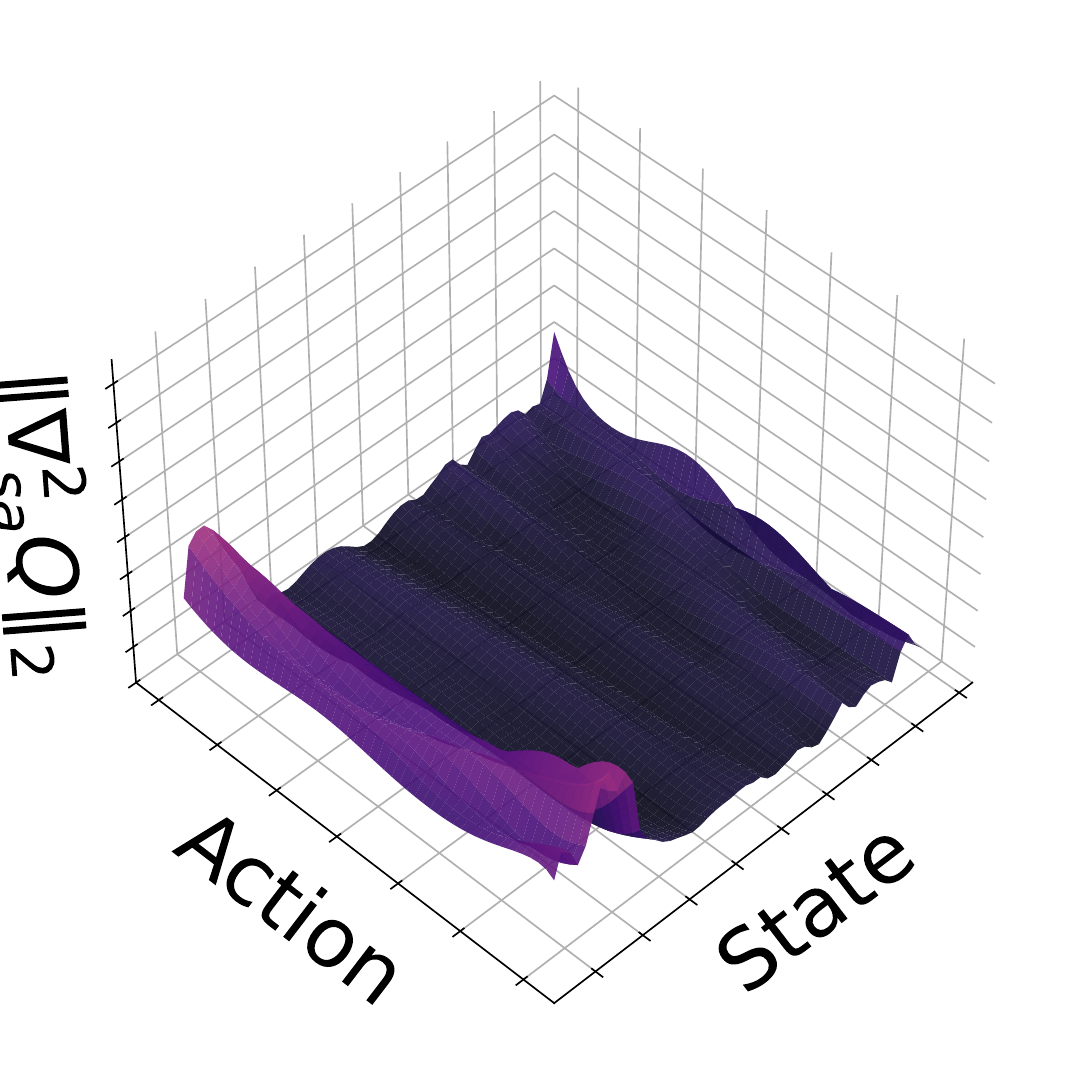}
    \qrow{Pendulum}{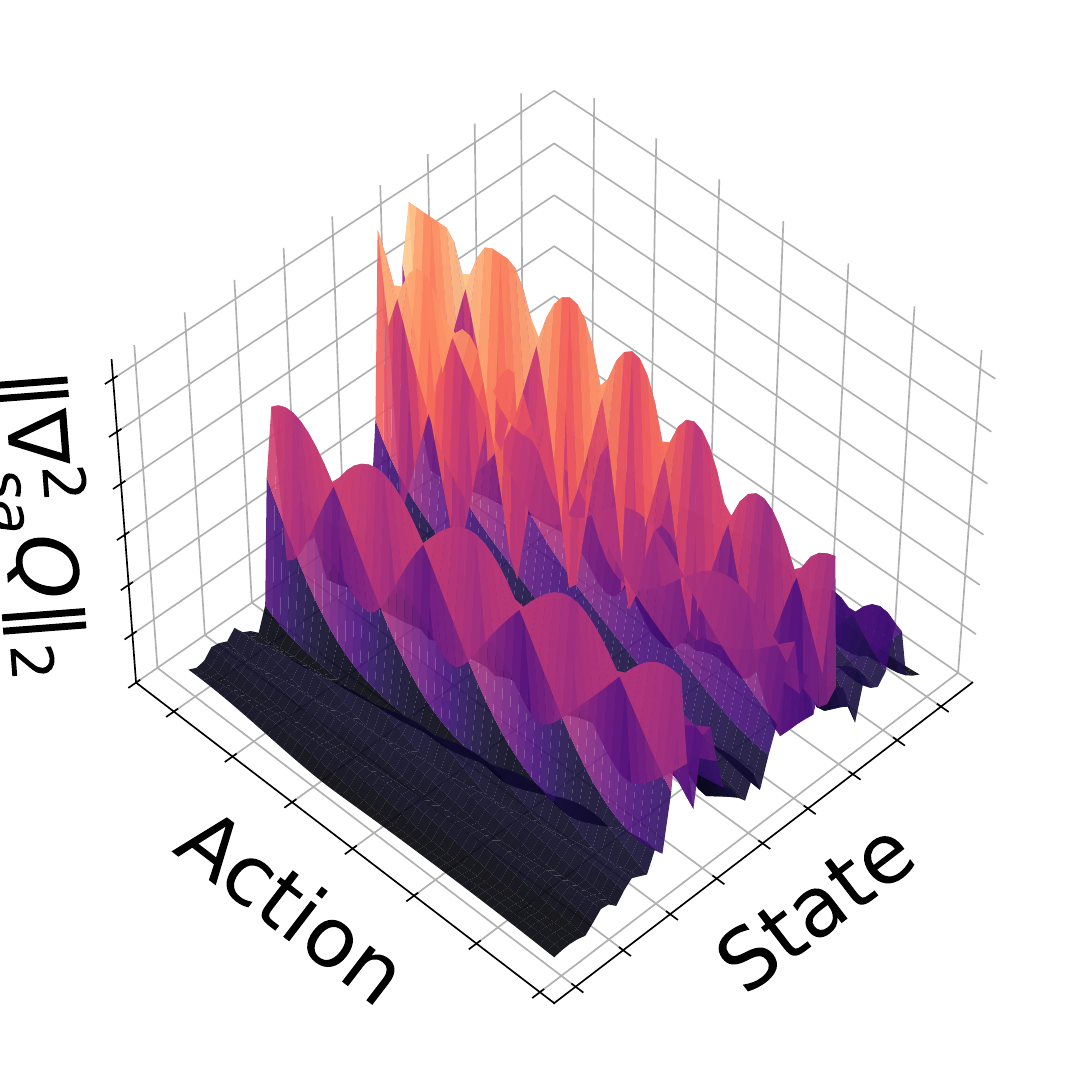}
    \qrow{Reacher}{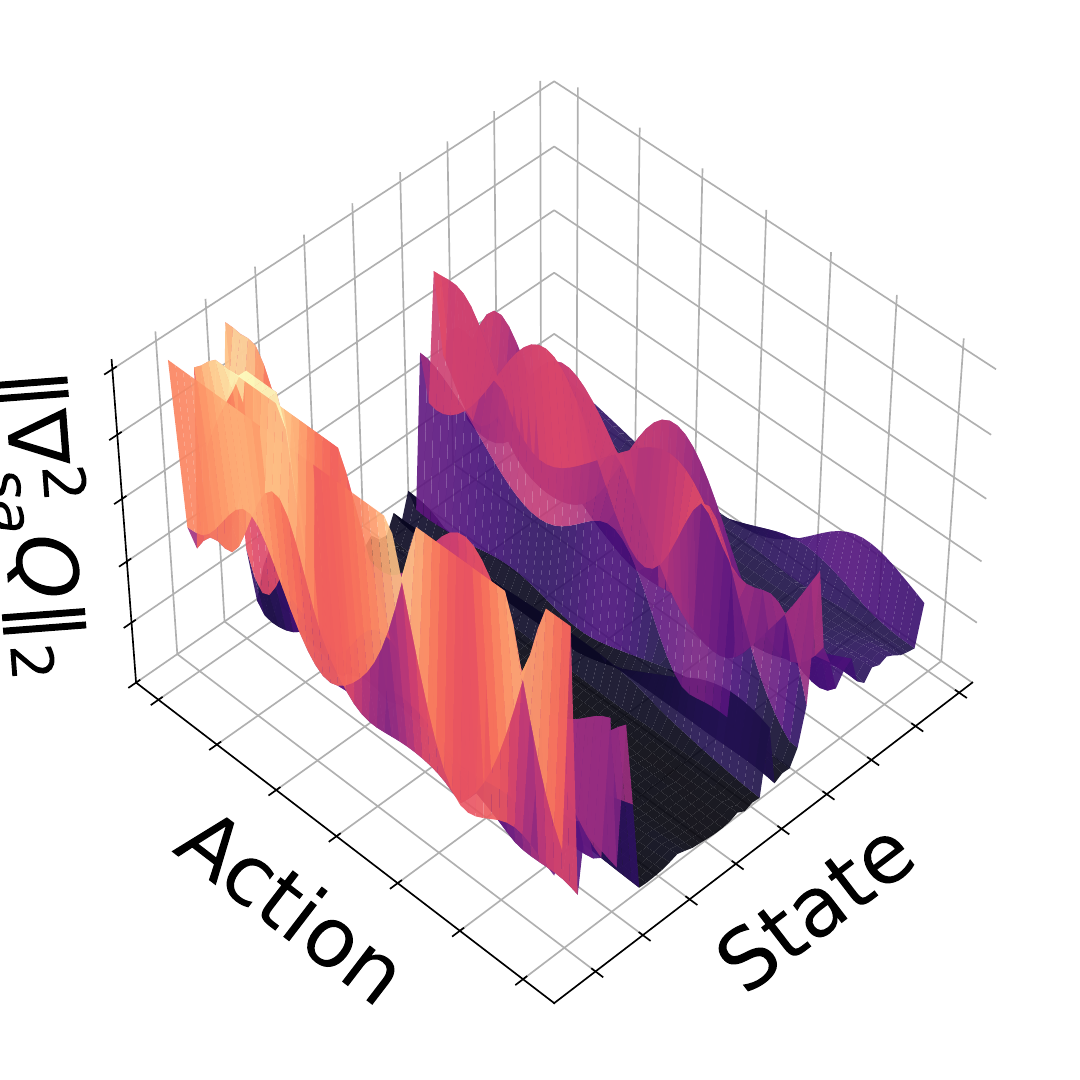}
    \qrow{Ant}{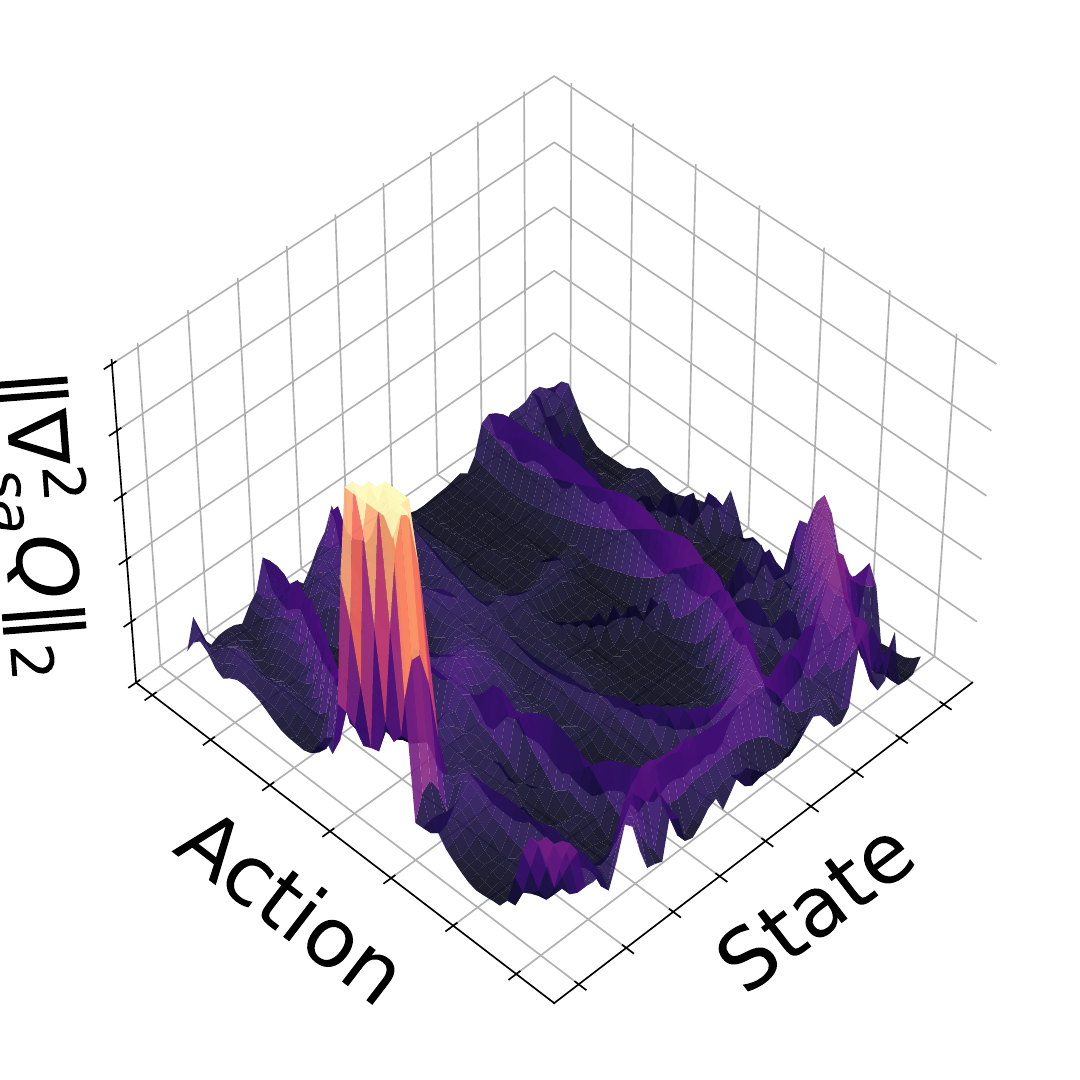}
    \qrow{Hopper}{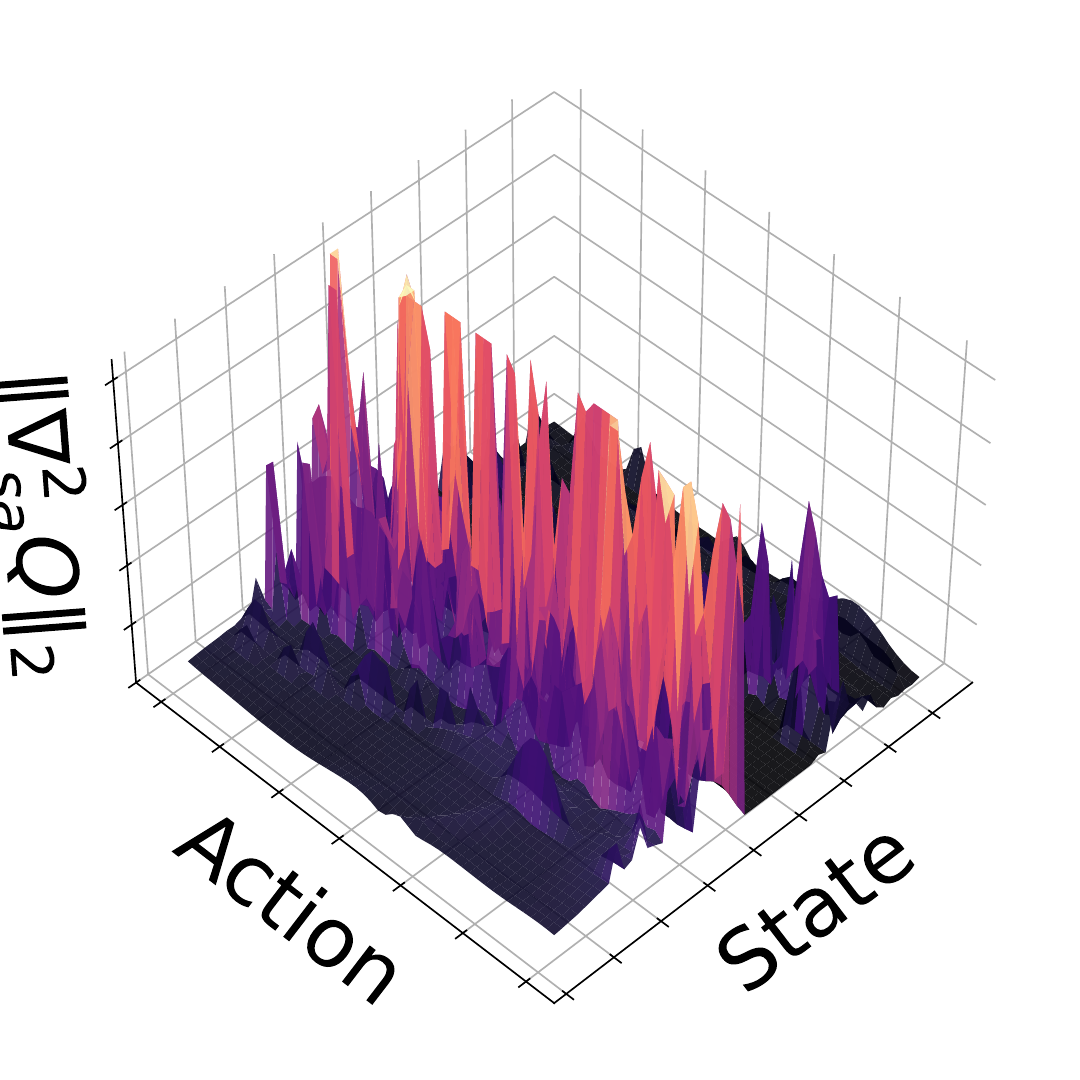}
    \qrow{Walker}{walker}

    \caption{Comprehensive 3D visualization of the mixed-partial Hessian norm $\|\nabla_{sa}^2 Q\|$ across six Gymnasium environments. Each row corresponds to an environment, and each column represents a different stabilization method. While baseline methods exhibit highly irregular landscapes with sharp spikes (indicating an unstable learning signal), PAVE effectively paves the Q-gradient field, providing a smooth and stable landscape. For each environment, the Z-axis is clipped at the Base model's 99th percentile for consistent comparison across methods.}
    \label{fig:q_field_comprehensive_td3}
\end{figure*}

\begin{figure*}[t!]
    \centering
    \setlength{\tabcolsep}{1pt} 
    
    \begin{tabularx}{0.98\textwidth}{p{0.02\textwidth} XXXXX}
        & \centering \textbf{SAC Base} & \centering \textbf{CAPS} & \centering \textbf{GRAD} & \centering \textbf{ASAP} & \centering \textbf{PAVE} \tabularnewline
    \end{tabularx}
    \vspace{2pt}

    \newcommand{\qrow}[2]{
        \begin{minipage}{0.02\textwidth}
            \rotatebox[origin=c]{90}{\scriptsize \textbf{#1}}
        \end{minipage}%
        \begin{minipage}{0.96\textwidth}
            \centering
            \begin{subfigure}{0.19\textwidth}
                \includegraphics[width=\linewidth]{figures/camera_ready_q_gradient_sac/vis_Heat_vanilla_#2_seed652165_idx0_3d_unified.pdf}
            \end{subfigure}%
            \begin{subfigure}{0.19\textwidth}
                \includegraphics[width=\linewidth]{figures/camera_ready_q_gradient_sac/vis_Heat_caps_sac_#2_seed652165_idx0_3d_unified.pdf}
            \end{subfigure}%
            \begin{subfigure}{0.19\textwidth}
                \includegraphics[width=\linewidth]{figures/camera_ready_q_gradient_sac/vis_Heat_grad_sac_#2_seed652165_idx0_3d_unified.pdf}
            \end{subfigure}%
            \begin{subfigure}{0.19\textwidth}
                \includegraphics[width=\linewidth]{figures/camera_ready_q_gradient_sac/vis_Heat_asap_sac_#2_seed652165_idx0_3d_unified.pdf}
            \end{subfigure}%
            \begin{subfigure}{0.19\textwidth}
                \includegraphics[width=\linewidth]{figures/camera_ready_q_gradient_sac/vis_Heat_pave_sac_#2_seed652165_idx0_3d_unified.pdf}
            \end{subfigure}
        \end{minipage} \vspace{4pt} \\
    }

    \qrow{Lunar}{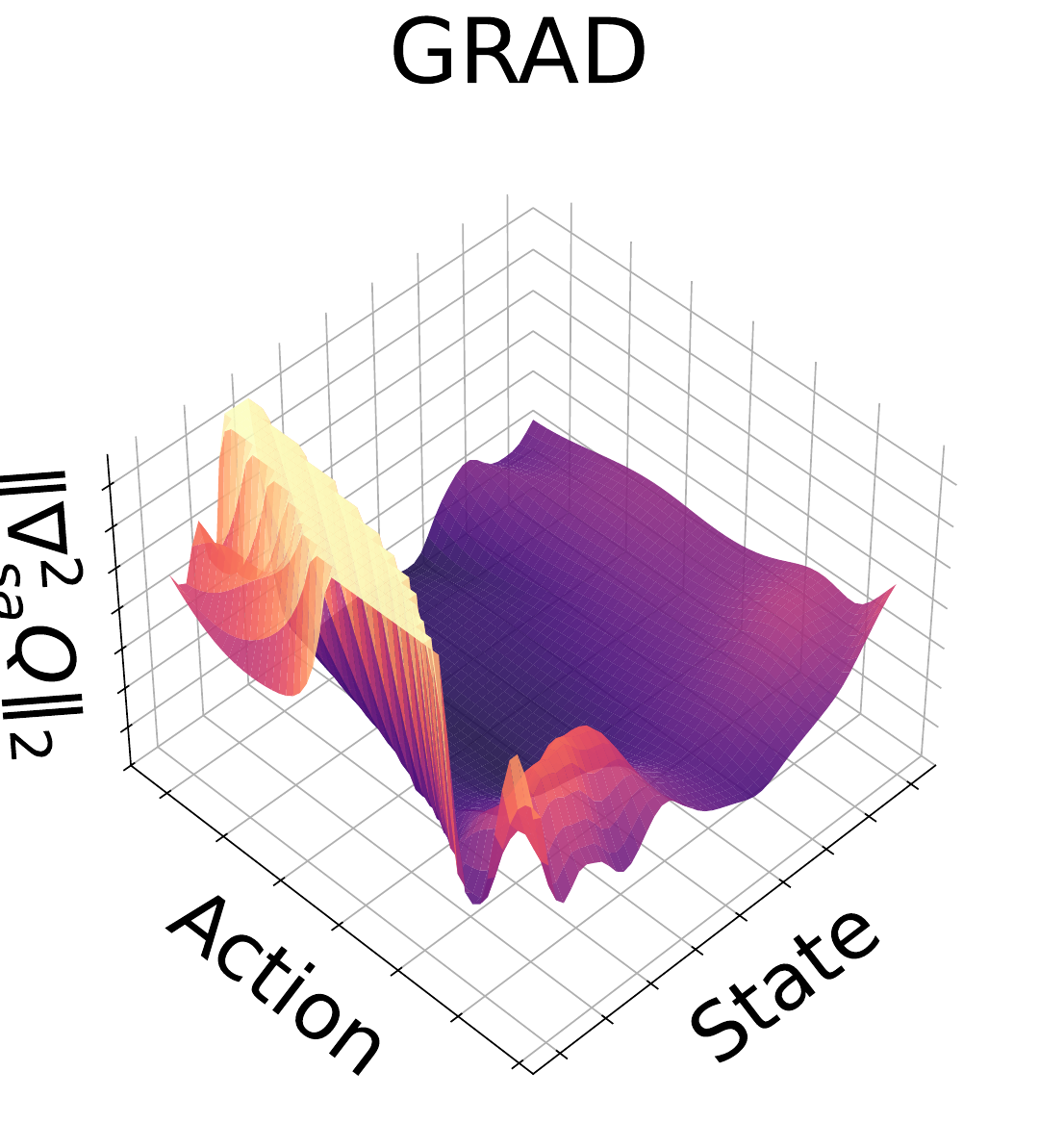}
    \qrow{Pendulum}{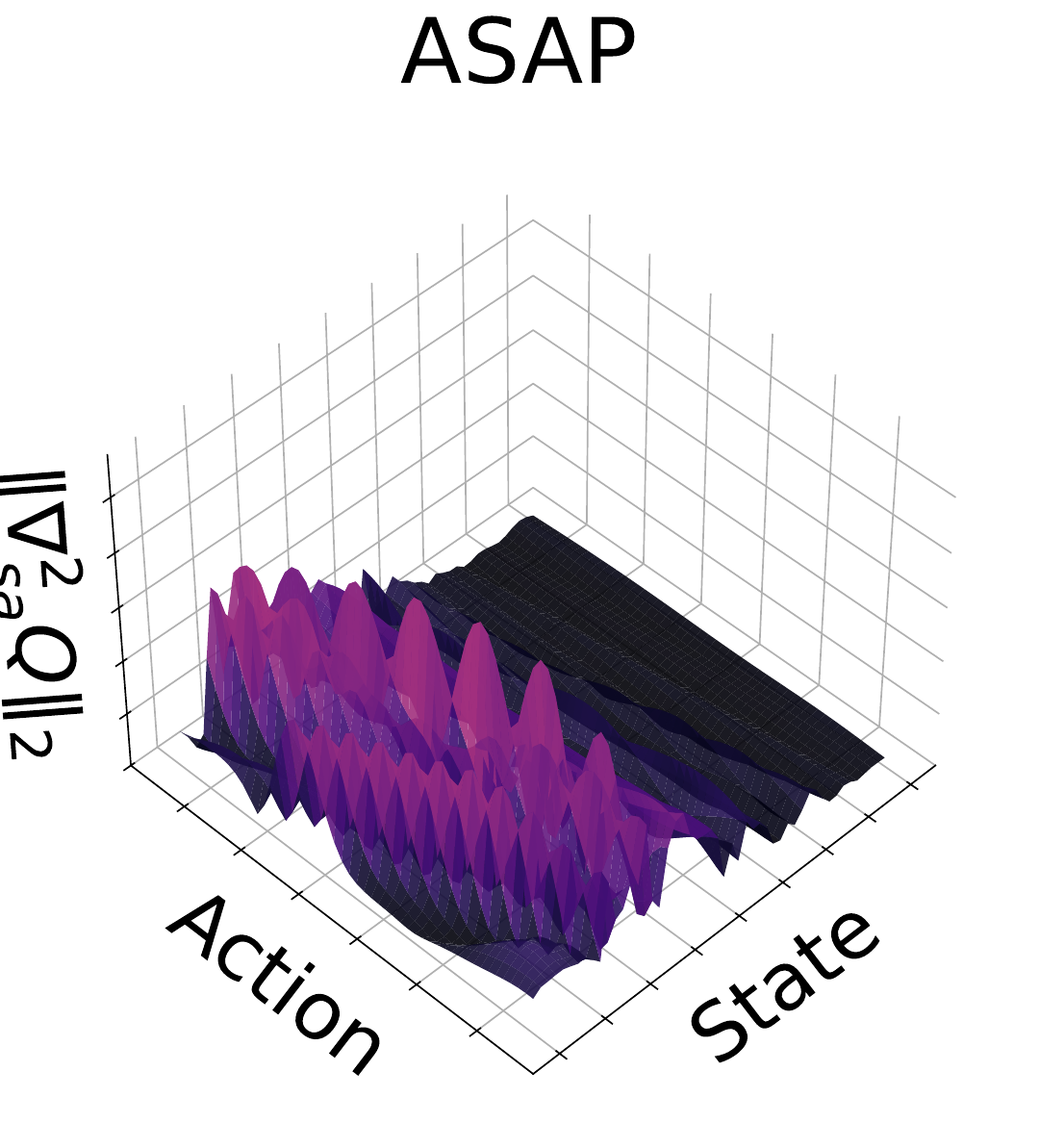}
    \qrow{Reacher}{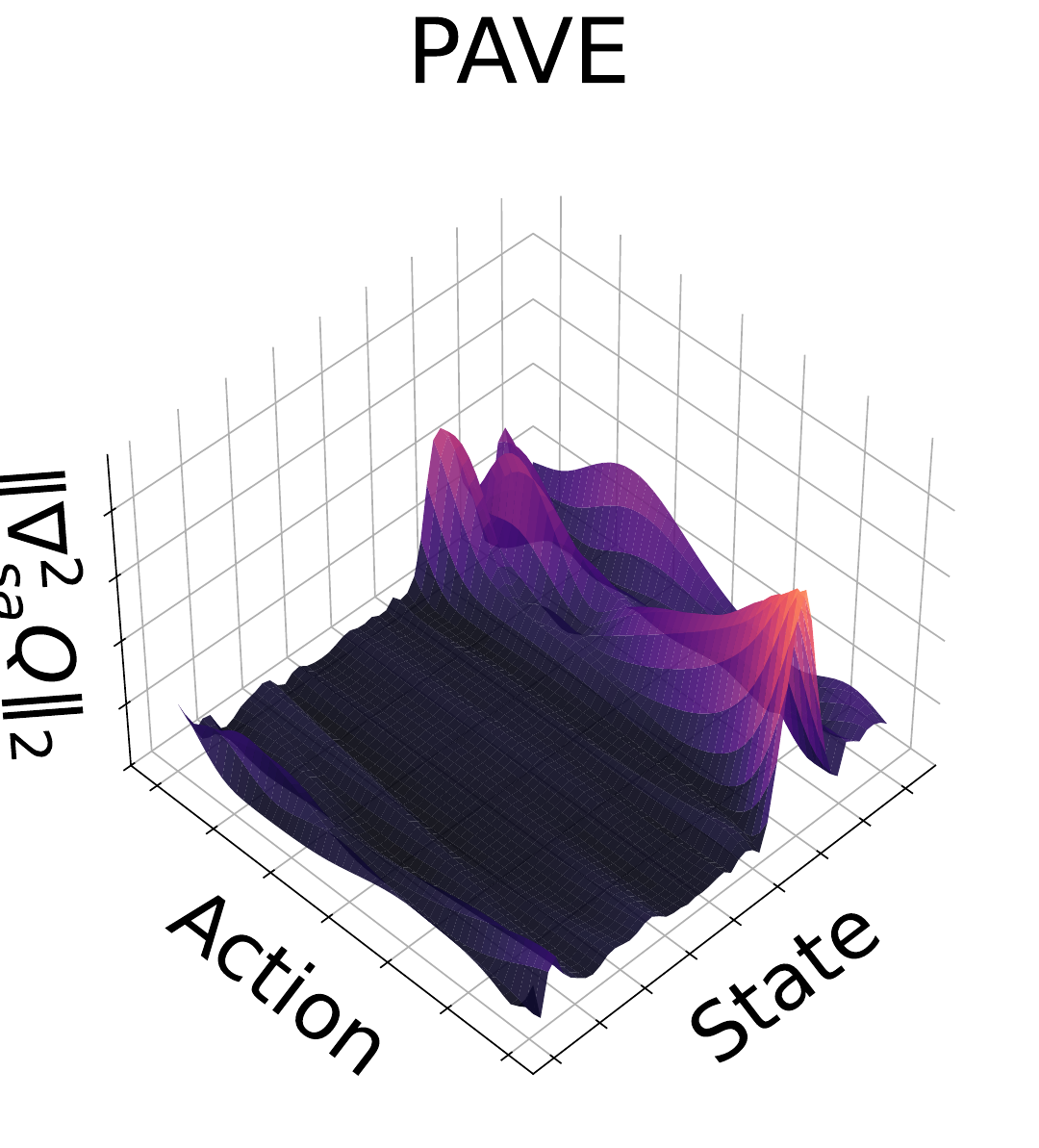}
    \qrow{Ant}{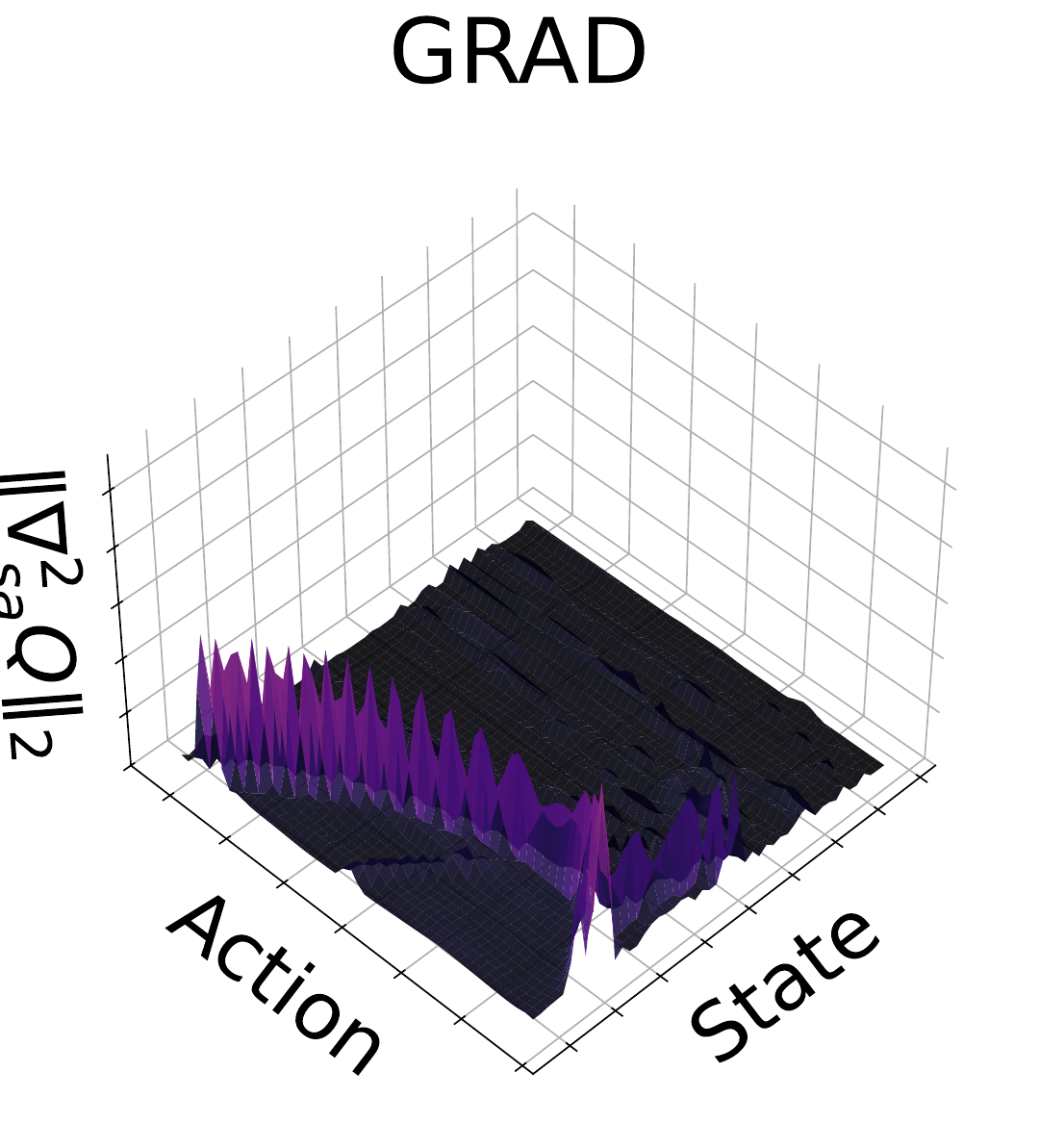}
    \qrow{Hopper}{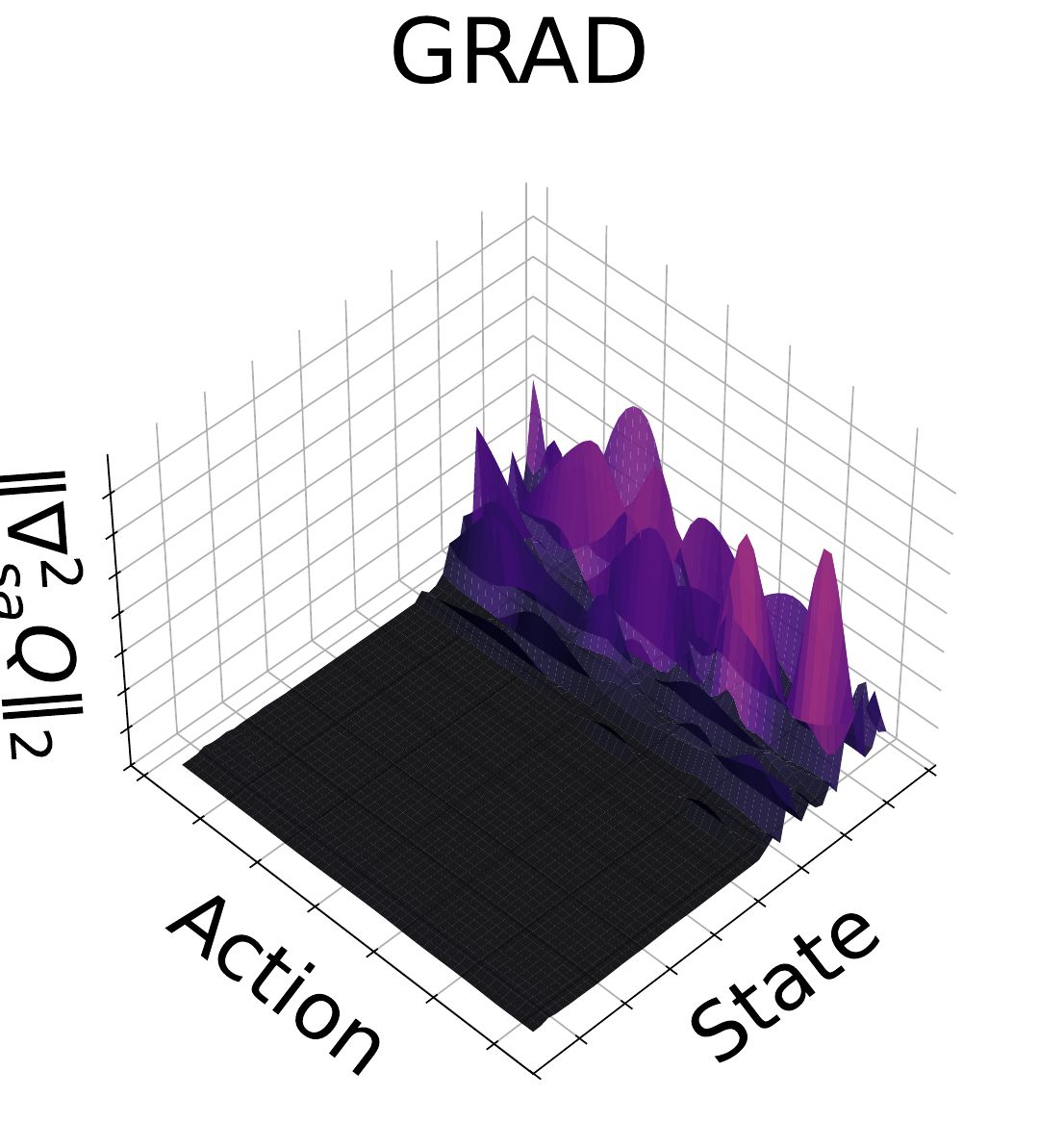}
    \qrow{Walker}{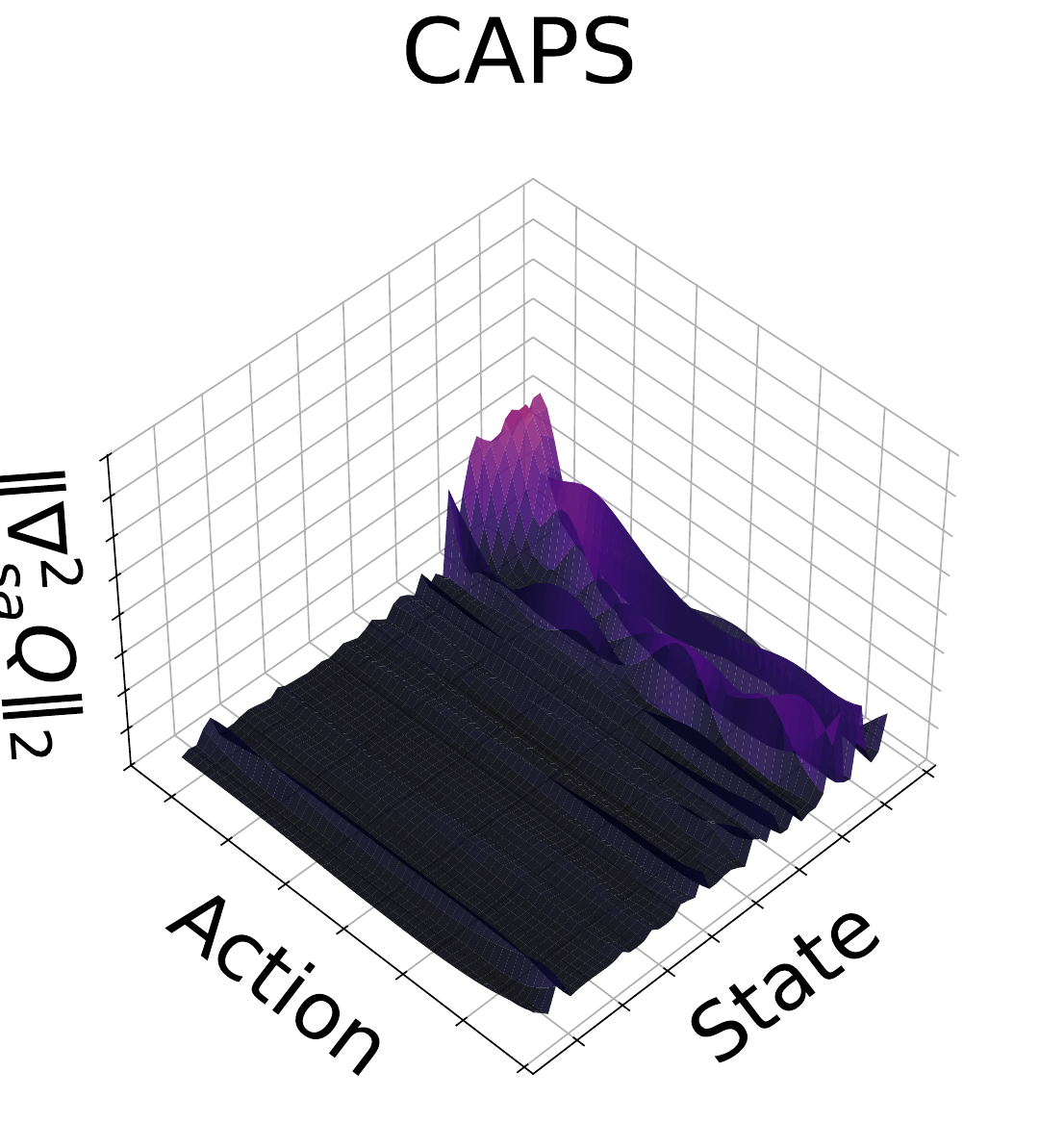}

    \caption{Comprehensive 3D visualization of the mixed-partial Hessian norm $\|\nabla_{sa}^2 Q\|$ across six Gymnasium environments. Each row corresponds to an environment, and each column represents a different stabilization method. While baseline methods exhibit highly irregular landscapes with sharp spikes (indicating an unstable learning signal), PAVE effectively paves the Q-gradient field, providing a smooth and stable landscape. For each environment, the Z-axis is clipped at the Base model's 99th percentile for consistent comparison across methods.}
    \label{fig:q_field_comprehensive_sac}
\end{figure*}




\begin{figure*}[t!]
    \centering
    \includegraphics[width=\linewidth]{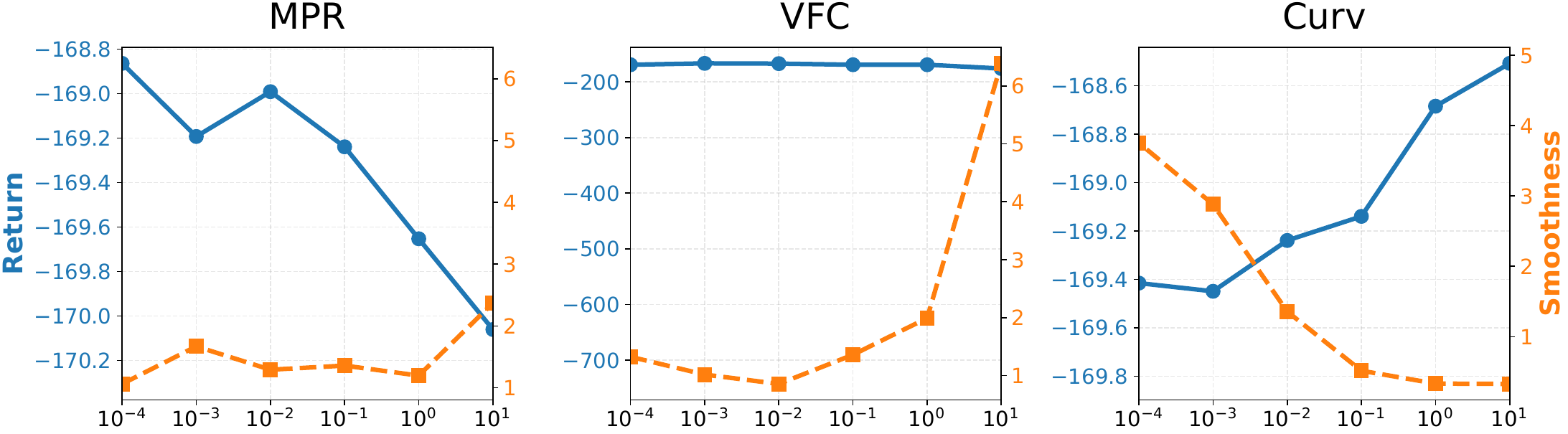}
    \caption{Hyperparameter Sensitivity on Pendulum. 
    For each subplot, only the target parameter was varied while the other two were held fixed at their default settings.}
    \label{fig:sensitivity}
\end{figure*}
\section{Hyperparameter Sensitivity Analysis}
\label{app:sensitivity}

Figure~\ref{fig:sensitivity} illustrated the impact of varying each effective $\lambda$ on the Average Return and Smoothness Metric.
At excessively high values, the smoothness metric notably deteriorated (dashed orange line increased), accompanied by a sharp decline in returns.
This phenomenon directly validates our theoretical bound $L \le M/\mu$.
Minimizing gradient volatility ($M$) via MPR or VFC in isolation tends to flatten the Q-landscape, driving the action-space curvature ($\mu$, the denominator) toward zero.
As the denominator vanishes, the norm of the inverse Hessian $||\nabla_{aa}^2 Q||^{-1}$ explodes, paradoxically amplifying policy sensitivity despite the regularization.
In contrast, the curvature term ($\lambda_3$) demonstrated a critical role in stabilization by strictly lower-bounding $\mu$.
It effectively prevented the vanishing gradient problem, ensuring that the inverse Hessian did not act as an amplification factor.
The results highlighted that $\lambda_3$ acted as a necessary counterbalance, maintaining high returns and stability even when strong smoothness regularization was applied.

\section{Theoretical Complexity Analysis}
\label{app:complexity}
To provide a rigorous assessment of PAVE's efficiency, we denote the dimensionality of the state space as $k = \dim(\mathcal{S})$ and the action space as $d = \dim(\mathcal{A})$. PAVE is engineered to impose second-order geometric constraints while maintaining linear computational scaling, $\mathcal{O}(k + d)$, avoiding the quadratic costs typically associated with explicit Hessian construction.

\subsection{Efficient Geometric Proxies}
\begin{itemize}
    \item \textbf{MPR \& VFC}: Instead of computing the full mixed-partial Hessian $\nabla_{sa}^2 Q \in \mathbb{R}^{d \times k}$, which incurs a cost of $\mathcal{O}(kd)$, these terms utilize a finite-difference approximation grounded in a Taylor expansion. This requires only two gradient evaluations per update, effectively reducing the complexity to $\mathcal{O}(k + d)$.
    \item \textbf{Curvature Preservation}: The $\mathcal{L}_{\text{Curv}}$ term employs Hutchinson's trace estimator with random Rademacher vectors to calculate the Hessian-vector product (HVP). This stochastic estimation maintains $\mathcal{O}(d)$ complexity, avoiding the $\mathcal{O}(d^2)$ requirement for full entry-wise computation of $\nabla_{aa}^2 Q$.
\end{itemize}

\subsection{Total Computational Overhead}
As summarized in Table~\ref{tab:complexity}, PAVE maintains linear scaling across all components. Notably, these geometric regularizers are applied exclusively as auxiliary losses to the critic, leaving the actor update mechanism entirely unchanged. This decoupling ensures the framework's scalability for high-dimensional articulated control tasks such as the Walker and Ant benchmarks.

\begin{table}[t!]
\centering
\caption{Comparison of theoretical computational complexity per update step ($k$: state dim, $d$: action dim).}
\label{tab:complexity}
\small
\begin{tabular}{lcc}
\toprule
Component & Standard Hessian-based & PAVE \\
\midrule
Mixed-Partials ($\nabla_{sa}^2 Q$) & $\mathcal{O}(kd)$ & $\mathcal{O}(k + d)$ \\
Action Curvature ($\nabla_{aa}^2 Q$) & $\mathcal{O}(d^2)$ & $\mathcal{O}(d)$ \\
Actor Update & $\mathcal{O}(d)$ & $\mathcal{O}(d)$ \\
\midrule
Total Complexity & $\mathcal{O}(kd + d^2)$ & $\mathcal{O}(k + d)$ \\
\bottomrule
\end{tabular}
\end{table}

\section{Visualization of Learning Curves}
Figures~\ref{fig:learning_curves_td3} and~\ref{fig:learning_curves_sac} presented the learning curves of across various Gymnasium environments. 
We reported the performance of our proposed method against several baselines including BASE, CAPS, GRAD, and ASAP. 
Each curve illustrated the mean episodic reward over five independent seeds, with shaded regions representing the standard deviation.

\section{Extended Implementation Details}
\subsection{Report on Hyperparameter Setting}
We provide the hyperparameters for each method in the Gymnasium setting in Table~\ref{tab:hyperparameters_td3} and~\ref{tab:hyperparameters_sac}.

\paragraph{PAVE hyperparameter selection.} The three loss weights $\lambda_1$ (MPR), $\lambda_2$ (VFC), and $\lambda_3$ (Curv) in Eq.~\ref{eq:total_loss} were selected via a coordinate search starting from the default $\lambda_1\!=\!0.1$, $\lambda_2\!=\!0.1$, $\lambda_3\!=\!0.01$, with each weight swept one at a time. The tuning order was $\lambda_3$ first (most sensitive, since it gates the curvature denominator $\mu$ in $L\!\le\!M/\mu$), followed by $\lambda_2$, then $\lambda_1$. The perturbation scale $\sigma\!=\!0.01$ for MPR and the curvature floor $\delta\!=\!1.0$ for Curv were fixed across all environments. The resulting per-environment values reflect this coordinate search and are consistent with the sensitivity analysis reported in Appendix~\ref{appendix:hp_sensitivity}.

\begin{table}[t!]
\centering
\small
\caption{Detailed hyperparameters for each method in the Gymnasium-TD3 setting.}
\label{tab:hyperparameters_td3}
\begin{tabular}{lcccccc}
\toprule
\multirow{2}{*}{Method} & \multicolumn{6}{c}{Gymnasium Environments} \\
\cmidrule(lr){2-7}
 & LunarLander & Pendulum & Reacher & Ant & Hopper & Walker \\
\midrule
\multirow{3}{*}{CAPS} 
 & $\lambda_T = 0.1$ & $\lambda_T = 1.0$ & $\lambda_T = 0.1$ & $\lambda_T = 0.1$ & $\lambda_T = 0.1$ & $\lambda_T = 0.1$ \\
 & $\lambda_S = 0.5$ & $\lambda_S = 5.0$ & $\lambda_S = 0.5$ & $\lambda_S = 0.5$ & $\lambda_S = 0.5$ & $\lambda_S = 0.5$ \\
 & $\sigma = 0.2$    & $\sigma = 0.2$    & $\sigma = 0.2$    & $\sigma = 0.2$    & $\sigma = 0.2$    & $\sigma = 0.2$ \\
\midrule
GRAD 
 & $\lambda_T = 1.0$ & $\lambda_T = 1.0$ & $\lambda_T = 1.0$ & $\lambda_T = 1.0$ & $\lambda_T = 1.0$ & $\lambda_T = 1.0$ \\
\midrule
\multirow{3}{*}{ASAP} 
 & $\lambda_T = 0.005$ & $\lambda_T = 0.005$ & $\lambda_T = 0.1$ & $\lambda_T = 0.05$ & $\lambda_T = 0.07$ & $\lambda_T = 0.05$ \\
 & $\lambda_S = 0.03$  & $\lambda_S = 0.03$  & $\lambda_S = 0.1$ & $\lambda_S = 0.3$  & $\lambda_S = 0.3$  & $\lambda_S = 0.3$  \\
 & $\lambda_P = 2.0$   & $\lambda_P = 2.0$   & $\lambda_P = 2.0$ & $\lambda_P = 2.0$  & $\lambda_P = 2.0$  & $\lambda_P = 2.0$  \\
\midrule
\multirow{3}{*}{PAVE} 
 & $\lambda_1 = 0.1$  & $\lambda_1 = 2.0$   & $\lambda_1 = 0.1$  & $\lambda_1 = 0.1$   & $\lambda_1 = 0.1$   & $\lambda_1 = 0.1$ \\
 & $\lambda_2 = 0.1$  & $\lambda_2 = 0.005$ & $\lambda_2 = 0.1$  & $\lambda_2 = 0.005$ & $\lambda_2 = 0.005$ & $\lambda_2 = 0.1$ \\
 & $\lambda_3 = 0.01$ & $\lambda_3 = 2.0$   & $\lambda_3 = 0.01$ & $\lambda_3 = 0.5$   & $\lambda_3 = 0.5$   & $\lambda_3 = 0.01$ \\
\bottomrule
\end{tabular}

\end{table}

\begin{table}[t!]
\centering
\small
\caption{Detailed hyperparameters for each method in the Gymnasium-SAC setting.}
\label{tab:hyperparameters_sac}
\begin{tabular}{lcccccc}
\toprule
\multirow{2}{*}{Method} & \multicolumn{6}{c}{Gymnasium Environments} \\
\cmidrule(lr){2-7}
 & LunarLander & Pendulum & Reacher & Ant & Hopper & Walker \\
\midrule
\multirow{3}{*}{CAPS} 
 & $\lambda_T = 0.1$ & $\lambda_T = 1.0$ & $\lambda_T = 0.1$ & $\lambda_T = 0.1$ & $\lambda_T = 0.1$ & $\lambda_T = 0.1$ \\
 & $\lambda_S = 0.5$ & $\lambda_S = 5.0$ & $\lambda_S = 0.5$ & $\lambda_S = 0.5$ & $\lambda_S = 0.5$ & $\lambda_S = 0.5$ \\
 & $\sigma = 0.2$    & $\sigma = 0.2$    & $\sigma = 0.2$    & $\sigma = 0.2$    & $\sigma = 0.2$    & $\sigma = 0.2$ \\
\midrule
GRAD 
 & $\lambda_T = 1.0$ & $\lambda_T = 1.0$ & $\lambda_T = 1.0$ & $\lambda_T = 1.0$ & $\lambda_T = 1.0$ & $\lambda_T = 1.0$ \\
\midrule
\multirow{3}{*}{ASAP} 
 & $\lambda_T = 0.005$ & $\lambda_T = 0.005$ & $\lambda_T = 0.1$ & $\lambda_T = 0.05$ & $\lambda_T = 0.07$ & $\lambda_T = 0.05$ \\
 & $\lambda_S = 0.03$  & $\lambda_S = 0.03$  & $\lambda_S = 0.1$ & $\lambda_S = 0.3$  & $\lambda_S = 0.3$  & $\lambda_S = 0.3$  \\
 & $\lambda_P = 2.0$   & $\lambda_P = 2.0$   & $\lambda_P = 2.0$ & $\lambda_P = 2.0$  & $\lambda_P = 2.0$  & $\lambda_P = 2.0$  \\
\midrule
\multirow{3}{*}{PAVE} 
 & $\lambda_1 = 0.1$  & $\lambda_1 = 0.1$   & $\lambda_1 = 0.1$ & $\lambda_1 = 0.1$ & $\lambda_1 = 2.0$ & $\lambda_1 = 2.0$ \\
 & $\lambda_2 = 0.5$  & $\lambda_2 = 0.005$ & $\lambda_2 = 5\mathrm{e}{-4}$ & $\lambda_2 = 5\mathrm{e}{-4}$ & $\lambda_2 = 5\mathrm{e}{-4}$ & $\lambda_2 = 0.005$ \\
 & $\lambda_3 = 0.05$ & $\lambda_3 = 0.5$   & $\lambda_3 = 1.0$ & $\lambda_3 = 1.0$ & $\lambda_3 = 3.0$ & $\lambda_3 = 2.0$ \\
\bottomrule
\end{tabular}

\end{table}

\begin{figure*}[t]
    \centering
    \includegraphics[width=\linewidth]{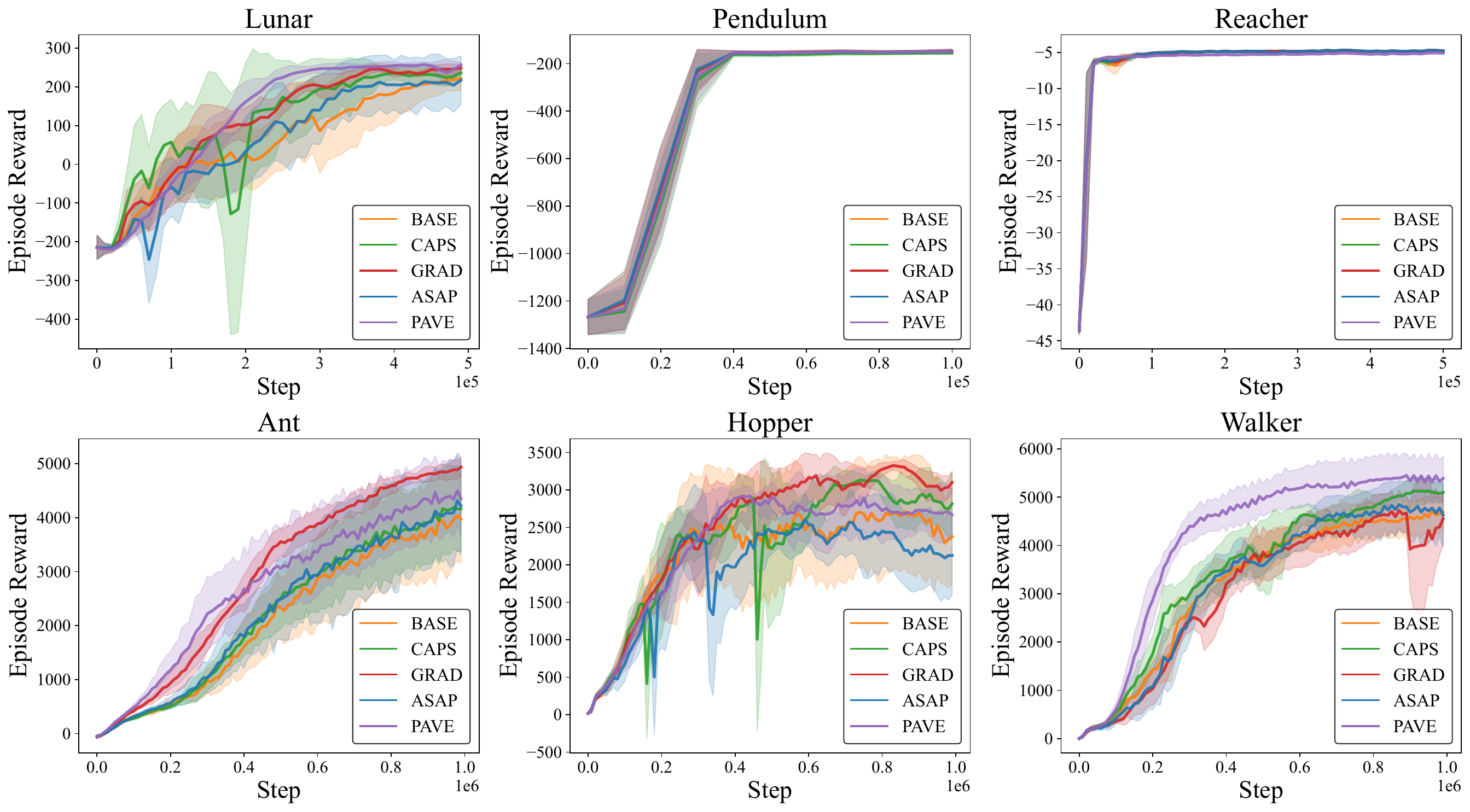}
    \caption{Learning curves of the TD3 algorithm across various Gymnasium environments.}
    \label{fig:learning_curves_td3}
\end{figure*}

\begin{figure*}[t]
    \centering
    \includegraphics[width=\linewidth]{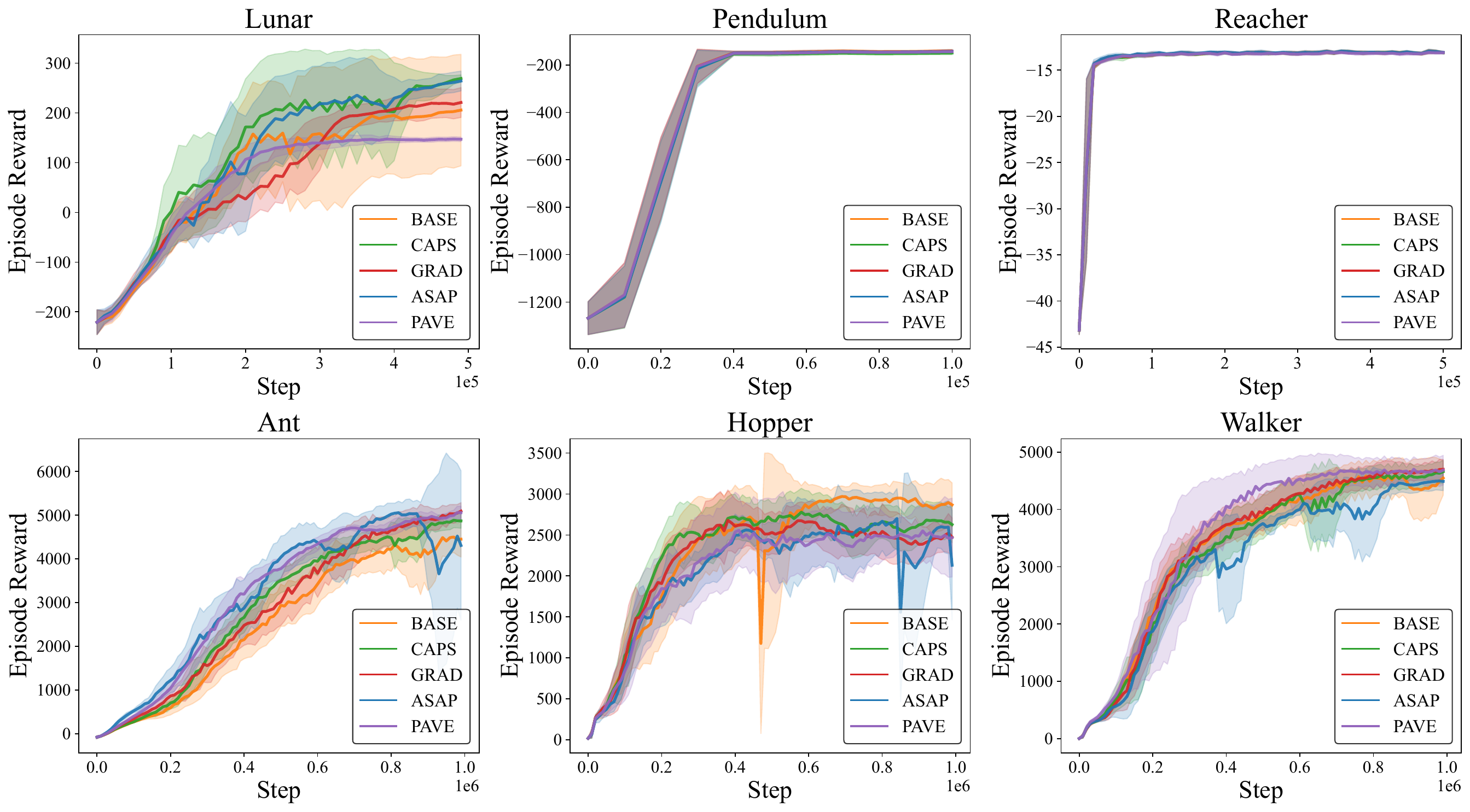}
    \caption{Learning curves of the SAC algorithm across various Gymnasium environments.}
    \label{fig:learning_curves_sac}
\end{figure*}

\subsection{Hardware Setting}
We utilized distinct computational resources depending on the experimental phase, while ensuring that all comparative algorithms within each benchmark were evaluated on identical hardware settings to guarantee fair comparison. 
The primary training for all TD3-based methods (including performance and component analysis) was conducted on a server equipped with an Intel Xeon Gold 6226R CPU (@ 2.90GHz) and an NVIDIA GeForce RTX 3090 GPU. 
Similarly, all SAC-based experiments were executed on a workstation featuring an Intel Xeon Gold 6426Y CPU and an NVIDIA RTX A4000 GPU. In contrast, the robustness analysis and Q-gradient field visualization were performed using pretrained models on a separate machine with an AMD Ryzen 7 5800X 8-Core Processor.


\section{Proxy Validation}
\label{appendix:proxy_validation}
We validated the approximation quality of MPR (Eq.~\ref{eq:lmpr_fd} finite-difference $\approx$ Eq.~\ref{eq:mpr_justification} exact analytic) and VFC (Eq.~\ref{eq:lvfc} finite-difference $\approx$ Eq.~\ref{eq:vfc_justification} exact analytic) by computing both quantities on the same trained PAVE critic under the SiLU-unified setting (6 environments $\times$ 5 seeds $\times$ 10 episodes; analytic side via autograd-based Hessian construction).
\paragraph{MPR Validation.} As derived in Eq.~\ref{eq:mpr_justification}, $\mathcal{L}_\text{MPR} \approx \sigma^2 \|\nabla^2_{sa}Q\|_F^2$. We computed both sides at each timestep; Table~\ref{tab:appendix_mpr_validation} reports the ratio.

\begin{table}[H]
\centering
\caption{MPR proxy validation: finite-difference $\mathcal{L}_\text{MPR}$ (Eq.~\ref{eq:lmpr_fd}) vs the exact analytic Hessian norm $\sigma^2\|\nabla^2_{sa}Q\|_F^2$ (Eq.~\ref{eq:mpr_justification}) on trained PAVE critics (SiLU-unified, 5 seeds $\times$ 10 episodes). Ratio = Proxy / Analytic.}
\label{tab:appendix_mpr_validation}
\small
\begin{tabular}{lccc}
\toprule
Env & Proxy (Eq.~11) & $\sigma^2\|\nabla^2_{sa}Q\|_F^2$ (Eq.~12) & Ratio \\
\midrule
Pend.   & 0.054 & 0.048 & 1.14 \\
Hop.    & 0.516 & 0.496 & 1.04 \\
Reach.  & 0.007 & 0.007 & 1.02 \\
Walker  & 0.473 & 0.522 & 0.91 \\
Ant     & 5.109 & 7.097 & 0.72 \\
Lunar   & 0.137 & 0.205 & 0.67 \\
\bottomrule
\end{tabular}
\end{table}

In 4/6 environments the ratio lies within 0.91--1.14. The remaining two show ratio $<$ 1, where the proxy underestimates due to higher-order terms --- a conservative direction for optimization.

\paragraph{VFC Validation.} VFC used the same finite-difference structure with actual transitions $\Delta s$ instead of random $\varepsilon$. The Taylor approximation degraded for large $\|\Delta s\|$, as quantified in Table~\ref{tab:appendix_vfc_validation}.

\begin{table}[H]
\centering
\caption{VFC proxy validation: finite-difference $\mathcal{L}_\text{VFC}$ (Eq.~\ref{eq:lvfc}) vs the directional Hessian norm $\|\nabla^2_{sa}Q \cdot \Delta s\|^2$ (Eq.~\ref{eq:vfc_justification}) on trained PAVE critics.}
\label{tab:appendix_vfc_validation}
\small
\begin{tabular}{lccc}
\toprule
Env & Proxy (Eq.~13) & $\|\nabla^2_{sa}Q \cdot \Delta s\|^2$ (Eq.~14) & Ratio \\
\midrule
Pend.   & 1.053 & 1.152 & 0.91 \\
Reach.  & 0.001 & 0.001 & 0.77 \\
Walker  & 1.930 & 6.137 & 0.31 \\
Hop.    & 0.654 & 2.170 & 0.30 \\
Lunar   & 0.093 & 0.947 & 0.10 \\
Ant     & 104.3 & 3087.0 & 0.03 \\
\bottomrule
\end{tabular}
\end{table}

VFC ratios spanned a wide range from 0.03 (Ant) to 0.91 (Pendulum), reflecting that the first-order Taylor approximation degraded for high-dimensional or discontinuous transitions. The proxies are designed to provide an optimization signal that reduces $M$, not to replicate the Hessian exactly. Despite the approximation gap, PAVE reduced $M_\text{sup}$ in 5/6 environments (see below).

\section{Exact Hessian Measurements}
\label{appendix:hessian_measurements}

\subsection{$M_\text{sup}$ (Spectral Norm of Mixed Hessian)}

We measured $M = \sup \|\nabla^2_{sa}Q\|_2$ by constructing the full $d_a \times d_s$ mixed Hessian via per-action-dimension autograd and extracting the largest singular value at each timestep (5 seeds $\times$ 10 episodes, trajectory-wise maximum). Table~\ref{tab:appendix_msup} reports the results across the six environments.

\begin{table}[H]
\centering
\caption{$M_\text{sup} = \sup\|\nabla^2_{sa}Q\|_2$ on trained TD3 critics (SiLU-unified, 5 seeds $\times$ 10 episodes). PAVE achieves the lowest $M_\text{sup}$ in 5/6 environments.}
\label{tab:appendix_msup}
\small
\begin{tabular}{lcccccc}
\toprule
 & Lunar & Pend. & Reach. & Ant & Hop. & Walk. \\
\midrule
Base & 990 & 447 & \textbf{16} & 8067 & 8831 & 1923 \\
CAPS & 1218 & 431 & 24 & 6452 & 1070 & 1624 \\
GRAD & 1273 & 338 & 18 & 7920 & 2479 & 1990 \\
ASAP & 1227 & 393 & 17 & 7964 & 27627 & 1772 \\
\textbf{PAVE} & \textbf{151} & \textbf{210} & 26 & \textbf{5962} & \textbf{934} & \textbf{643} \\
\bottomrule
\end{tabular}
\end{table}

PAVE achieved the lowest $M_\text{sup}$ in 5/6 environments.

\subsection{Negative Definiteness Rate}

We computed the full eigenvalue decomposition of $\nabla^2_{aa}Q$ via \texttt{eigvalsh} after symmetrization. Negative definiteness rate = fraction of $(s,a)$ where all eigenvalues $< 0$; Table~\ref{tab:appendix_negdef} summarizes the results.

\begin{table}[H]
\centering
\caption{Negative-definiteness rate of $\nabla^2_{aa}Q$ on trained TD3 critics (SiLU-unified, 5 seeds $\times$ 10 episodes). Fraction of $(s,a)$ where all eigenvalues $<0$. PAVE raises strict concavity by $2.1$--$4.5\times$ over Base in 5/6 environments; policy-side regularizers (CAPS/GRAD/ASAP) stay close to Base across most environments.}
\label{tab:appendix_negdef}
\small
\begin{tabular}{lcccccc}
\toprule
Method & Lunar & Pend. & Reach. & Ant & Hop. & Walk. \\
\midrule
Base & 0.352 & 0.466 & \textbf{1.000} & 0.174 & 0.259 & 0.141 \\
CAPS & 0.284 & 0.381 & 0.999 & 0.175 & 0.252 & 0.160 \\
GRAD & 0.389 & 0.697 & \textbf{1.000} & 0.196 & 0.185 & 0.161 \\
ASAP & 0.369 & 0.770 & \textbf{1.000} & 0.178 & 0.302 & 0.132 \\
\textbf{PAVE} & \textbf{0.842} & \textbf{0.995} & 0.994 & \textbf{0.659} & \textbf{0.856} & \textbf{0.637} \\
\bottomrule
\end{tabular}
\end{table}

PAVE improved concavity satisfaction in 5/6 environments (2.1--4.5$\times$ over Base). Notably, the three policy-side regularizers (CAPS, GRAD, ASAP) yielded rates close to the unregularized Base critic across most environments (with Pendulum as a partial exception where GRAD/ASAP rose to 0.70--0.77)---confirming that actor-side regularization largely leaves the critic's curvature geometry untouched, while only PAVE's critic-side $\mathcal{L}_\text{Curv}$ raises strict concavity into the 0.6--1.0 regime.

\section{Geometry Evolves During Training}
\label{appendix:during_training}

The bound $L\le M/\mu$ also makes a temporal prediction: as TD-learning fits a more complex Q-surface, $\|\nabla^2_{sa}Q\|$ should grow over training unless an active constraint counters it. We checkpointed the critic at intermediate steps (SiLU-unified TD3) and recomputed $M_\text{sup}$ together with the negative-definiteness rate. Tables~\ref{tab:appendix_during_training_lunar} and~\ref{tab:appendix_during_training_walker} report the trajectory on LunarLander and Walker.

\begin{table}[H]
\centering
\caption{$M_\text{sup}$ and negative-definiteness rate at intermediate training checkpoints on LunarLander (TD3, SiLU-unified).}
\label{tab:appendix_during_training_lunar}
\small
\begin{tabular}{llcc}
\toprule
Step & & $M_\text{sup}\!\downarrow$ & Neg.Def Rate$\uparrow$ \\
\midrule
100K & Base / PAVE & 273 / \textbf{101} & 0.277 / \textbf{0.497} \\
300K & Base / PAVE & 616 / \textbf{129} & 0.283 / \textbf{0.689} \\
500K & Base / PAVE & 1105 / \textbf{188} & 0.332 / \textbf{0.808} \\
\bottomrule
\end{tabular}
\end{table}

\begin{table}[H]
\centering
\caption{$M_\text{sup}$ and negative-definiteness rate at intermediate training checkpoints on Walker (TD3, SiLU-unified).}
\label{tab:appendix_during_training_walker}
\small
\begin{tabular}{llcc}
\toprule
Step & & $M_\text{sup}\!\downarrow$ & Neg.Def Rate$\uparrow$ \\
\midrule
200K & Base / PAVE & 636 / \textbf{167} & 0.071 / \textbf{0.548} \\
600K & Base / PAVE & 1212 / \textbf{417} & 0.121 / \textbf{0.600} \\
1M   & Base / PAVE & 1320 / \textbf{659} & 0.144 / \textbf{0.638} \\
\bottomrule
\end{tabular}
\end{table}

Base $M_\text{sup}$ grew roughly $4\times$ over Lunar training ($273\!\to\!1105$) and stayed high on Walker, confirming the predicted drift of the critic's mixed-partial volatility. PAVE, in contrast, kept $M_\text{sup}$ near constant ($<\!2\times$ growth on Lunar) and steadily lifted the strict-concavity rate from $<\!33\%$ to $64$--$81\%$. Critic-side regularization therefore counteracted the geometric pathology \emph{from early training onwards}---not merely at convergence---directly supporting the active role of $\mathcal{L}_\text{Curv}$ and the proxy losses across the training trajectory.

\section{Wall-Clock Convergence Time}
\label{appendix:convergence_time}

We measured wall-clock convergence time from the same SiLU-unified TD3 runs reported in Table~\ref{tab:TD3_metrics}. Convergence is defined as the first time the smoothed learning curve (uniform filter, window=10) reaches the method's own final performance (mean of the last 10\% of training). Table~\ref{tab:appendix_wallclock_td3} reports the per-environment convergence times.

\begin{table}[H]
\centering
\caption{Wall-clock convergence time on TD3 (SiLU-unified). Convergence = first time the smoothed learning curve (uniform filter, window=10) reaches the method's own final performance (last 10\% mean). 5 paper seeds, measured from TensorBoard logs.}
\label{tab:appendix_wallclock_td3}
\small
\begin{tabular}{lcccccc}
\toprule
TD3 & Lunar & Pend. & Reach. & Ant & Hop. & Walk. \\
\midrule
Base (min) & 206 & 26 & 62 & 392 & 202 & 400 \\
PAVE (min) & 203 & 36 & 79 & 453 & 212 & 514 \\
ratio & \textbf{0.99$\times$} & 1.38$\times$ & 1.27$\times$ & 1.16$\times$ & 1.05$\times$ & 1.29$\times$ \\
\bottomrule
\end{tabular}
\end{table}

Under SiLU-unified conditions, TD3 convergence ratios ranged from 0.99--1.38$\times$, with PAVE essentially tied with Base on Lunar (0.99$\times$). Inference speed is identical since PAVE does not modify the actor architecture.

\section{Critic-Centric Baseline Comparison}
\label{appendix:critic_baselines}

We compared against two critic-side baselines that act on $M$-related quantities: SN-Critic, which applies spectral normalization to the critic following the RL adaptation of~\cite{gogianu2021spectral,bjorck2021towards}, and GP-Critic, which adds a gradient penalty $\lambda\|\nabla_a Q\|^2$ on the critic in the spirit of~\cite{roth2017stabilizing,sokolic2017robust}. Both were trained under TD3 with SiLU on Lunar (L) and Walker (W), with results in Table~\ref{tab:appendix_critic_baselines}.

\begin{table}[H]
\centering
\caption{Critic-side baseline comparison on TD3 (SiLU-unified) for Lunar (L) and Walker (W). SN-Critic collapses entirely; GP-Critic reduces $M$ modestly but leaves $\mu$ at Base. Only PAVE moves both numerator and denominator of the bound.}
\label{tab:appendix_critic_baselines}
\small
\begin{tabular}{lcccc}
\toprule
Method & sm$\downarrow$ (L/W) & re$\uparrow$ (L/W) & $M_\text{sup}\!\downarrow$ (L/W) & Neg Def$\uparrow$ (L/W) \\
\midrule
Base & 1.81 / 1.99 & 228 / 4834 & 990 / 1923 & 0.35 / 0.14 \\
SN-Critic & 0.00 / 0.00 & $-$970 / 14 & 0.1 / -- & 0.40 / 0.18 \\
GP-Critic & 2.18 / 1.66 & 202 / 5011 & 882 / 1090 & 0.37 / 0.15 \\
\textbf{PAVE} & \textbf{0.54} / \textbf{1.27} & \textbf{264} / \textbf{5563} & \textbf{151} / \textbf{643} & \textbf{0.84} / \textbf{0.64} \\
\bottomrule
\end{tabular}
\end{table}

SN-Critic suppressed both $M$ and $\mu$ indiscriminately and collapsed the policy ($re\!=\!-970$ on Lunar, $14$ on Walker). GP-Critic reduced $M_\text{sup}$ modestly yet left $\mu$ at Base level---never beating Base smoothness on Lunar and only marginally improving it on Walker, far above PAVE's. Only PAVE---whose three losses target $\nabla^2_{sa}Q$ and $|\mathrm{tr}\,\nabla^2_{aa}Q|$ on independent axes---moved both the numerator and the denominator of the bound in the right direction, exactly as the theory predicts.

\paragraph{GP-Critic per-$\lambda$ results.} We swept the gradient penalty weight $\lambda$ across multiple orders of magnitude to verify that the failure was structural rather than a tuning artifact (Table~\ref{tab:appendix_gp_sweep}). No setting beat Base smoothness on Lunar; on Walker, only the smallest weights retained task performance, but smoothness remained far above PAVE's.

\begin{table}[H]
\centering
\caption{GP-Critic gradient-penalty weight $\lambda$ sweep on Lunar and Walker. No setting beats Base smoothness on Lunar; on Walker only small weights retain task performance, yet smoothness remains above PAVE's.}
\label{tab:appendix_gp_sweep}
\small
\begin{tabular}{lcccc}
\toprule
& \multicolumn{2}{c}{Lunar} & \multicolumn{2}{c}{Walker} \\
\cmidrule(lr){2-3}\cmidrule(lr){4-5}
$\lambda$ & $sm\downarrow$ & $re\uparrow$ & $sm\downarrow$ & $re\uparrow$ \\
\midrule
$0.01$ & 2.12 & \textbf{196} & \textbf{1.66} & \textbf{5012} \\
$0.1$  & \textbf{2.18} & 203 & 1.96 & 4990 \\
$1.0$  & 3.30 & 180 & 2.28 & 4142 \\
$10.0$ & --- & --- & 0.49 & 484 (collapse) \\
\midrule
PAVE   & \textbf{0.54} & \textbf{264} & \textbf{1.27} & \textbf{5563} \\
\bottomrule
\end{tabular}
\end{table}

\section{Hyperparameter Sensitivity}
\label{appendix:hp_sensitivity}

Sensitivity sweep on Lunar and Walker (seeds \#6--10, different hardware from main experiments). Default: $\lambda_1\!=\!0.1$, $\lambda_2\!=\!0.1$, $\lambda_3\!=\!0.01$. Tables~\ref{tab:appendix_sensitivity_sm} and~\ref{tab:appendix_sensitivity_re} report smoothness and return sensitivities respectively.

\begin{table}[H]
\centering
\caption{Smoothness sensitivity (sm$\downarrow$) to PAVE hyperparameters on Lunar / Walker. Scale factors $\times$0.01, $\times$1, $\times$100 applied to default values ($\lambda_1\!=\!0.1$, $\lambda_2\!=\!0.1$, $\lambda_3\!=\!0.01$).}
\label{tab:appendix_sensitivity_sm}
\small
\begin{tabular}{lccc}
\toprule
Scale & $\lambda_3$ (Curv) & $\lambda_2$ (VFC) & $\lambda_1$ (MPR) \\
\midrule
$\times$0.01 & 1.35 / 1.78 & 1.13 / 1.70 & 0.63 / 1.61 \\
$\times$1    & \textbf{0.54 / 1.52} & \textbf{0.54 / 1.52} & \textbf{0.54 / 1.52} \\
$\times$100  & 0.32 / 1.31 & 0.47 / 1.27 & 0.60 / 1.80 \\
\bottomrule
\end{tabular}
\end{table}

\begin{table}[H]
\centering
\caption{Return sensitivity (re$\uparrow$) to PAVE hyperparameters on Lunar / Walker. Same scale factors as Table~\ref{tab:appendix_sensitivity_sm}.}
\label{tab:appendix_sensitivity_re}
\small
\begin{tabular}{lccc}
\toprule
Scale & $\lambda_3$ re & $\lambda_2$ re & $\lambda_1$ re \\
\midrule
$\times$0.01 & 250 / 4753 & 156 / 5142 & 262 / 4728 \\
$\times$1    & \textbf{255 / 5383} & \textbf{255 / 5383} & \textbf{255 / 5383} \\
$\times$100  & 261 / 4253 & 264 / 4480 & 257 / 4598 \\
\bottomrule
\end{tabular}
\end{table}

$\lambda_3$ (Curv) was the most sensitive parameter; $\lambda_1$ (MPR) was the most stable.

\section{Full Factorial Component Ablation}
\label{appendix:factorial_ablation}

To complement the incremental ablation in Table~\ref{tab:ablation_matrix}, we ran the full $2^3$ factorial over the three PAVE components on Walker (TD3, SiLU-unified, conducted on different hardware from the main experiments). This isolates the marginal contribution of each loss and the interactions among them; results are in Table~\ref{tab:appendix_factorial}.

\begin{table}[H]
\centering
\caption{Full $2^3$ factorial ablation over the three PAVE components on Walker (TD3, SiLU-unified, different hardware from main experiments). $\mathcal{L}_\text{Curv}$ is the prerequisite: without it, MPR+VFC worsens smoothness beyond Base.}
\label{tab:appendix_factorial}
\small
\begin{tabular}{lccc}
\toprule
Configuration & Curv & Walker re$\uparrow$ & Walker sm$\downarrow$ \\
\midrule
Base                  & --        & 4589 & 1.828 \\
MPR only              & $\times$  & 5059 & 1.754 \\
VFC only              & $\times$  & 4788 & 1.823 \\
MPR + VFC             & $\times$  & 5010 & 2.095 \\
Curv only             & $\checkmark$ & 4896 & 1.650 \\
MPR + Curv            & $\checkmark$ & 5177 & 1.718 \\
VFC + Curv            & $\checkmark$ & 5537 & 1.918 \\
\textbf{MPR + VFC + Curv (Full)} & $\checkmark$ & \textbf{5502} & \textbf{1.483} \\
\bottomrule
\end{tabular}
\end{table}

Two findings stood out. First, $\mathcal{L}_\text{Curv}$ was a prerequisite: without it, the MPR+VFC combination actually \emph{worsened} smoothness beyond Base (sm $2.095 > 1.828$), directly reflecting Prop.~\ref{prop:lipschitz} --- reducing $M$ alone is insufficient if $\mu$ is not preserved. Second, given $\mathcal{L}_\text{Curv}$, MPR and VFC contributed distinct value: MPR+Curv yielded the best 2-term smoothness (1.718) while VFC+Curv yielded higher return (5537) but worse smoothness (1.918). Only the full combination achieved both the best sm (1.483) and competitive re (5502).

\section{Combination with Actor-Side Regularization}
\label{appendix:caps_pave}

To examine whether actor-side regularization provides additional benefit on top of PAVE's critic-side stabilization, we combined PAVE with CAPS~\cite{mysore2021regularizing}, a representative actor-side smoothness regularizer. All four methods share the SiLU-unified TD3 backbone on LunarLander; Table~\ref{tab:appendix_caps_pave} reports the comparison.

\begin{table}[H]
\centering
\caption{Combination with actor-side regularization (CAPS) on LunarLander (TD3, SiLU-unified). CAPS+PAVE performs comparably to PAVE alone, supporting critic-side sufficiency.}
\label{tab:appendix_caps_pave}
\small
\begin{tabular}{lcc}
\toprule
Method & sm$\downarrow$ & re$\uparrow$ \\
\midrule
Base               & 1.809 & 227.8 \\
CAPS (actor-side)  & 0.702 & 249.0 \\
PAVE (critic-side) & \textbf{0.541} & \textbf{264.5} \\
CAPS + PAVE        & 0.543 & 264.0 \\
\bottomrule
\end{tabular}
\end{table}

CAPS alone improved smoothness substantially over Base (sm $1.809\!\rightarrow\!0.702$). PAVE further improved both sm (0.541) and re (264.5). When combined, CAPS+PAVE (sm 0.543, re 264.0) performed comparably to PAVE alone, suggesting that once the critic geometry is stabilized, the actor naturally tracks the smooth implied policy without additional output-level constraints. This empirically supported the central claim of the paper: regularizing the critic geometry is a sufficient route to policy smoothness in standard actor-critic methods.

\section{Extension to Humanoid}
\label{appendix:humanoid}

We extended the evaluation to \textit{Humanoid-v5} under the SiLU-unified TD3 setting. This environment was initially excluded from the main experiments because vanilla TD3 failed to learn meaningfully here, rendering the smoothness metric uninformative (a near-zero policy trivially yielded a very low sm score), as shown in Table~\ref{tab:appendix_humanoid}.

\begin{table}[H]
\centering
\caption{Extension to Humanoid-v5 (TD3, SiLU-unified). PAVE enables non-trivial learning ($309\!\to\!2894$); the accompanying sm rise reflects Base TD3's degenerate near-constant policy.}
\label{tab:appendix_humanoid}
\small
\begin{tabular}{lcc}
\toprule
Method & re$\uparrow$ & sm$\downarrow$ \\
\midrule
Base TD3 & 309 & 0.06 \\
\textbf{PAVE} & \textbf{2894} & 0.76 \\
\bottomrule
\end{tabular}
\end{table}

PAVE enabled TD3 to learn a non-trivial policy in Humanoid (re $309\!\rightarrow\!2894$, $9.4\times$ improvement). The accompanying rise in sm ($0.06\!\rightarrow\!0.76$) reflected the fact that Base TD3 produced near-constant outputs --- a degenerate ``smoothness'' that disappeared once the agent actually controlled the body. Comparable smoothness numbers in Tables~\ref{tab:TD3_metrics}--\ref{tab:sac_metrics} should be read against the corresponding return; PAVE's Humanoid sm of $0.76$ was competitive with the values it achieved on the locomotion suite (Hopper $0.95$, Walker $1.27$) while delivering meaningful task performance.

\end{document}